\theoremstyle{plain}
\newtheorem{thm}{\protect\theoremname}
\theoremstyle{plain}
\newtheorem{lem}[thm]{\protect\lemmaname}
\theoremstyle{plain}
\newcommand{\E}{\mathbb{E}}
\DeclareMathOperator*{\minimize}{minimize}
\providecommand{\lemmaname}{Lemma}
\providecommand{\theoremname}{Theorem}
\providecommand{\corrname}{Corollary}
\title{Grad-GradaGrad? A Non-Monotone Adaptive Stochastic Gradient Method}
\author{%
  Aaron Defazio\\
  Meta AI\\
   \And
   Baoyu Zhou\\
   Lehigh University\\
   \And
   Lin Xiao\\
  Meta AI\\
}
\begin{document}

\maketitle

\begin{abstract}
The classical AdaGrad method adapts the learning rate by dividing
by the square root of a sum of squared gradients. Because this sum
on the denominator is increasing, the method can only decrease step
sizes over time, and requires a learning rate scaling hyper-parameter to
be carefully tuned. To overcome this restriction, we introduce GradaGrad,
a method in the same family that naturally grows or shrinks the learning rate based on a different accumulation in the denominator, 
one that can both increase and decrease.
We show that it obeys a similar convergence rate as AdaGrad and demonstrate its non-monotone adaptation capability with experiments.
\end{abstract}

\section{Introduction}
\label{sec:intro}We consider stochastic optimization problems of the form \[
\minimize_{x\in \mathbb{R}^{d}} ~f(x):=\E_\xi[f(x,\xi)],
\]
where $\xi$ is a random variable. We assume that each  $f(x,\cdot)$ is convex in $x$ but potentially non-smooth. Problems of this class are most often addressed with stochastic gradient methods (SGD).
Aiming to accelerate the convergence of stochastic gradient methods in practice, adaptive rules for adjusting the learning rate have a long history \citep[e.g.,][]{Kesten58} and have been extensively studied more recently.
The stochastic gradient descent method, a member of the Mirror Descent (MD) family takes the general form:
\begin{equation}\label{eqn:adagrad}
x_{k+1} = x_k - A_{k+1}^{-1} g_k,
\end{equation}
where $A_{k+1}$ is a dynamically adjusted positive definite matrix.
Due to the high dimensionality of modern machine learning applications, $A_{k+1}$ is usually taken to be a diagonal matrix. AdaGrad \citep{adagrad} uses the choice:
\begin{equation}\label{eqn:adagrad-Ak}
A_{i,k+1}^{-1} = \frac{\gamma}{\sqrt{\textstyle\sum_{t=0}^k g_{i,t}^2}},
\end{equation}
where $A_{i,k+1}$ is the $i$th diagonal entry of $A_{k+1}$, $g_{i,t}$ is the $i$th coordinate of the stochastic gradient $g_t$, and $\gamma$ is a hyper-parameter.
Another popular and effective algorithm is Adam \citep{kingma2014adam}, which uses an exponential weighted average instead of a summation, in combination with momentum.
Both methods have seen widespread adoption and have been the focus of extensive followup work 
\citep[e.g.,][]{TielemanHinton2012lecture,reddi2019convergence,li_orabona_2019}.

A fundamental limitation of AdaGrad is \emph{monotone  adaptation}.
In other words, since the sum in~\eqref{eqn:adagrad-Ak} is increasing, it can only decrease the learning rate over time.
Therefore, the global learning rate parameter $\gamma$ must be carefully tuned. 
In addition, once the learning rate becomes very small, it cannot pick up speed again even if the landscape becomes flat later on.
\begin{figure}
\noindent\begin{minipage}{.4\textwidth}
\begin{algorithmic}
    \STATE {\bfseries Input:} $\gamma_0$, $\rho$, $r$, $x_0$
    \STATE Initialize $g_{-1} = 0$, $\alpha_0=0$ 
    \FOR{$k=0$ {\bfseries to} $n$}
        \STATE $g_{k} = \nabla f(x_{k},\xi_{k})$
        \STATE $v_{k} = \left\Vert g_{k}\right\Vert ^{2}-\rho\left\langle g_{k},g_{k-1}\right\rangle $
        \IF{$v_k \geq 0$}
            \STATE $\gamma_{k+1}=\gamma_{k}$
            \STATE $\alpha_{k+1}=\alpha_{k}+v_{k}$
        \ELSE
            \STATE $v_{k}=\max\left(v_{k},-r\alpha_{k}\right)$
            \STATE $\gamma_{k+1}=\gamma_{k}\sqrt{1-v_{k}/\alpha_{k}}$
            \STATE $\alpha_{k+1}=\alpha_{k}$
        \ENDIF
        \STATE $A_{k+1}=\sqrt{\alpha_{k+1}}/\gamma_{k+1}$
        \STATE $x_{k+1} =x_{k}-A_{k+1}^{-1}g_{k}$
    \ENDFOR
\end{algorithmic}
\captionof{algocf}{GradaGrad \\ (Simplified Scalar variant)}\label{alg:gradagrad-scalar}
\end{minipage}%
\begin{minipage}{.6\textwidth}
  \centering
  \includegraphics[width=1\columnwidth]{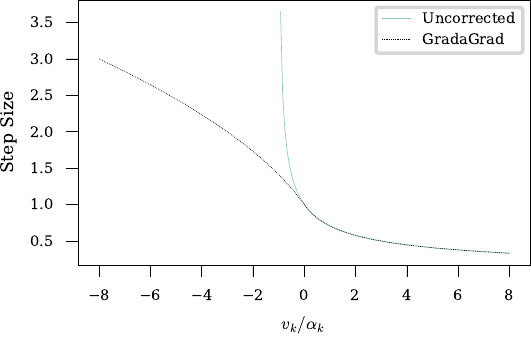}
  \captionof{figure}{The GradaGrad update (Equation~\ref{eq:negupdate})
results in a smooth scaling of the learning rate when negative values
of $v_{k}$ are encountered, compared to the naive solution from Equation~\eqref{eqn:gradagrad-Ak} which quickly explodes.}\label{fig:quadroot}
\end{minipage}
\end{figure}
In this paper, we propose a \emph{non-monotone} adaptive stochastic gradient method called Grad-GradaGrad, or GradaGrad for short.
The basic idea is to append an inner product term $\langle g_t,g_{t-1}\rangle$ to the summands in~\eqref{eqn:adagrad-Ak}.
Specifically,
\begin{equation}\label{eqn:gradagrad-Ak}
A_{i,k+1} \propto \sqrt{\textstyle\sum_{t=0}^k \left(g_{i,t}^2 - \rho g_{i,t}g_{i,t-1}\right)},
\end{equation}
where $\rho>0$ is a constant that can be tuned. 
The expectation of $-g_{i,t}g_{i,t-1}$ is the hyper-gradient of~$f$ with respect to the learning rate for the $i$th coordinate (see Section~\ref{sec:related-work}).
Intuitively, if consecutive stochastic gradients are highly positively correlated, then $g_{i,t}^2-\rho g_{i,t}g_{i,t-1}$ can be negative and thus the learning rate will be increased.
Similarly, if consecutive gradients are negatively correlated, then the learning rate will be decreased. 
In the ``goldilocks'' zone, the hyper-gradient term wobbles around 0 and the learning rate changes in a similar manner to AdaGrad. Compared to AdaGrad, our method is able to rapidly increase the learning rate when it is too small (Figure \ref{fig:abs-figure}).

However, applying the idea above directly runs into several problems. 
The most apparent problem is that the sum under the square root may become a negative number. A more subtle problem is how to ensure that the sequences of $A_{i,k+1}$ increase, in a non-monotone manner, at an appropriate rate to guarantee convergence. 
After laying out the basic ideas in Section~\ref{sec:augmenting},
we show in Section~\ref{sec:gradagrad} how the GradaGrad method addresses these problems with reparametrization and other techniques.
As a preview, Algorithm~\ref{alg:gradagrad-scalar} shows a simplified version of GradaGrad using a scalar learning rate.

In the rest of Section~\ref{sec:gradagrad}, 
we show that GradaGrad obeys a similar convergence bound as the AdaGrad method. 
In Section~\ref{sec:experiments}, preliminary experiments demonstrate that our method matches the practical performance of several benchmark methods and has the advantage of non-monotone  adaptation.

\paragraph{Notation and Assumptions}
Throughout this paper, we let $x_{*}$ denote any minimizer of $f$ and $x_{0}$ be the initial point. We use the convention that $g_{-1}$ is the zero vector.
The notation $B\succ 0$ means that $B$ is a symmetric and positive definite matrix.
For $B\succ 0$, we define $\left\Vert x\right\Vert _{B}=\sqrt{x^{T}Bx}$
and $\left\langle x,y\right\rangle _{B}=x^{T}By$. 
Some of our results will use the Lipschitz smoothness of $f$, in which case we denote the smoothness constant as $L$.
We define the filtration
$\Phi=\left(\mathcal{F}_{1},\mathcal{F}_{2},\dots\right)$, where
$\mathcal{F}_{k}$ is the $\sigma$-algebra generated by a sequence
of random variables $\{\xi_0,\ldots,\xi_{k-1}\}$. 
The notation
$\E_{\xi_k}[\,\cdot\,]$ and $\E[\,\cdot\,|\,\mathcal{F}_k]$
denote the expectation conditioned on $\mathcal{F}_k$.

\begin{figure*}[t]
\centering\subfloat[AdaGrad]{\includegraphics[width=0.98\textwidth]{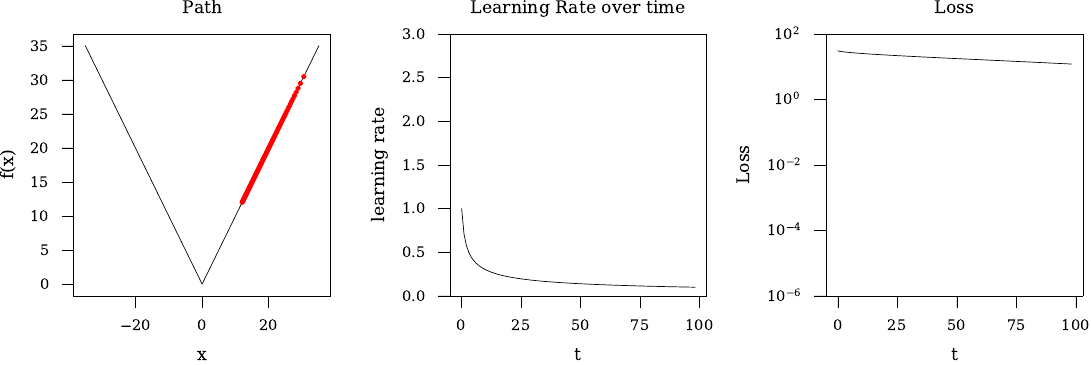}
}
\par
\subfloat[GradaGrad]{\includegraphics[width=0.98\textwidth]{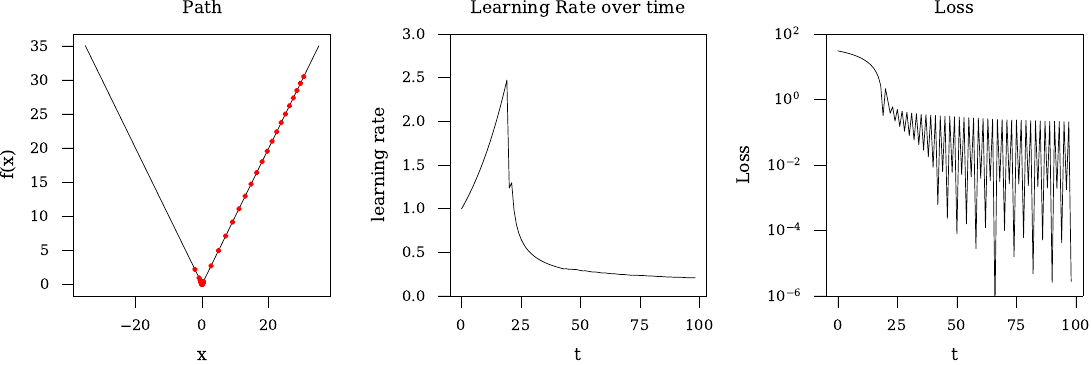}
}
\caption{\label{fig:abs-figure}
(a) Monotone learning rates cause AdaGrad to converge slowly when starting with a small initial learning rate;
(b)~GradaGrad automatically adapts to a poor initial learning rate in a non-monotone manner, leading to faster convergence.
}\vspace{-0.5em}
\end{figure*}

\section{Related Work}
\label{sec:related-work}Adaptive stochastic gradient methods have seen heavy investigation on a number of fronts, including non-monotone methods that can both increase and decrease the learning rate.
\citet{NoMorePesky2013} proposed a method that combines the estimation of gradient variance and local curvature.
\citet{CoinBetting2017NIPS} proposed a method to adjust the learning rate based on a ``coin-betting'' strategy. Methods that maintain multiple learning rates, such as Metagrad \citep{metagrad} are also potentially non-monotonic.
Another line of work that can both increase and decrease the learning rate are stochastic line search methods
\citep{Vaswani2019linesearch,PaquetteScheinberg2020,ZhangLangLiuX2020}. None of the existing strategies can be considered ``drop-in'' replacements for AdaGrad in the same way our approach is, as they require either additional gradient evaluations, additional memory overhead or knowledge of problem-dependent constants that make them harder to use in practice than the AdaGrad family of methods.

Methods in the Polyak class \citep{stochastic_polyak} compute learning rates using the function
value sub-optimality $f(x_{k})-f(x_{*})$, or in the stochastic case $f(x_k,\xi_k)-f^*_{\xi_k}$ where $f^*_{\xi_k}$ is the minimial value of $f(\cdot,\xi_k)$. These methods can exhibit
impressive performance but rely on knowledge or estimates of the minimal
function value $f_{*}$ or $f^*_{\xi_k}$, which is not available for most problems. 

The idea of adjusting the learning rate based on inner products of consecutive stochastic gradients goes back to \citet{Kesten58}, and it was extended to the multi-dimensional case by \citet{DelyonJuditsky93}. \citet{MirzoakhmedovUryasev83} showed that such  adaptation schemes can be interpreted as stochastic hyper-gradient methods.
Specifically, 
suppose $d_k$ is a search direction depending on the random variable $\xi_k$ (e.g., $g_k$ itself or combined with momentum).
Let's define a merit function of the learning rate at each iteration
\[
\phi_k(\eta) = \E\left[f(x_{k+1})\,|\,\mathcal{F}_k\right]
= \E\left[f(x_k-\eta d_k)\,|\,\mathcal{F}_k\right].
\]
The hypergradient of $\phi_k$ with respect to~$\eta$ can be derived as
\begin{align}
\frac{\partial\phi_k}{\partial \eta}
&=\E\left[\frac{\partial}{\partial \eta}f(x_k-\eta d_k)\,|\,\mathcal{F}_k\right] \nonumber 
 =\E\left[\langle \nabla f(x_k-\eta d_k), -d_k \rangle\,|\,\mathcal{F}_k\right] \nonumber\\
&=\E\left[-\langle g_{k+1}, d_k\rangle\,|\,\mathcal{F}_k\right], \label{eqn:hyper-grad}
\end{align}
where we assume necessary technical conditions for changing the order of differentiation and expectation.
Therefore, $-\langle g_{k+1}, d_k\rangle$ can be viewed as a stochastic hypergradient. 
If $d_k=g_k$, then it becomes $-\langle g_{k+1}, g_k\rangle$.
If in addition the learning rate is different for each coordinate, say $\eta_i$ for the $i$th coordinate, then it can be shown that
\[
\frac{\partial\phi_k}{\partial \eta_i}
=\E\left[-\langle g_{i,k+1}, g_{i,k}\rangle\,|\,\mathcal{F}_k\right].
\]
These are exactly the terms, after multiplying by a constant $\rho$, that are added to the summands under the square root in~\eqref{eqn:gradagrad-Ak}, although the time-step is shifted backwards by one.
In the Adam-style variant of GradaGrad we develop, $d_k$ is the convex combination of $g_k$ and a momentum term, and the additional terms in calculating $A_{i,k}$ becomes $-\rho g_{i,t} d_{i,t-1}$. 

Adaptive stochastic gradient methods based on hyper-gradient descent have been studied in the machine learning literature before, including
\citet{Jacobs88DBD}, \citet{Sutton92IDBD},
\citet{Schraudolph1999}, \citet{MahmoodSutton12TuningFree}, and more recently by \citet{Baydin18hypergradient}.
The major difference between our work and these previous work is that, although our adaptation scheme carries the intuition of hyper-gradient, our method is not really a hyper-gradient method and our analysis does not rely on the hyper-gradient interpretation. 

The ``without descent'' adaptive method of \citet{malitsky2019adaptive} is the most direct inspiration for our work. In a footnote, they suggest the following rule for learning rate $\eta_k$, when the denominator is positive:
\[
\eta_{k}^{2}\leq\frac{\left\Vert x_{k}-x_{k-1}\right\Vert ^{2}}{3\left\Vert g_k\right\Vert ^{2}-4\left\langle g_k,g_{k-1}\right\rangle }.
\]
Our method arose from attempts to adapt this learning rate to the stochastic case.

\section{Augmenting AdaGrad}
\label{sec:augmenting}For simplicity we consider the Euclidean step setting, with an unconstrained
domain. In this situation, as shown by \citet{adagrad}, the AdaGrad update
\begin{equation}\label{eq:update-1}
\begin{aligned}
g_{k} & =\nabla f(x_{k},\xi_{k}), \\
x_{k+1} & =x_{k}-A_{k+1}^{-1}g_{k},
\end{aligned}
\end{equation}
obeys the inequality:
\begin{align}
2\mathbb{E}\sum_{k=0}^{n}\left(f(x_{k})-f_{*}\right) \leq\left\Vert x_{0}-x_{*}\right\Vert _{A_{1}}^{2}
 +\mathbb{E}\sum_{k=1}^{n}\left\Vert x_{k}-x_{*}\right\Vert _{\left(A_{k+1}-A_{k}\right)}^{2}
 +\mathbb{E}\sum_{k=0}^{n}\left\Vert g_{k}\right\Vert _{A_{k+1}^{-1}}^{2}.\label{eq:adagrad-key-bound}
\end{align}
The right-hand side of this inequality consists of some initial conditions
(first term), an iterate distance error term (second term) and a gradient norm
term (third term). The key insight of the AdaGrad method is that the
gradient error term's growth can be restricted to a $\mathcal{O}(\sqrt{n})$ rate
over time if the matrices $A_k$ are carefully chosen. For simplicity of notation, consider
the one-dimensional case for the remainder of this section. 
The gradient error term grows each step by:
$
g_{k}^{2}/A_{k+1},
$
and the overall growth is controlled by using an inductive bound of
the form:
$
q A_{k}+g_{k}^{2}/A_{k+1} \leq q A_{k+1}.
$
AdaGrad uses the choice $A_{k+1}^{2}=\sum_{t=0}^{k}g_{t}^{2}$ and $q=2$ to
keep the overall growth rate $\mathcal{O}(\sqrt{n})$. In our approach, we add
additional terms which combine with the gradient error term.
Inspired by hyper-gradient methods, we use the following Lemma.
\begin{lem}
\label{lem:hyper-lemma}For any $A_{k}\succ0$ and $\rho>0$, the sequences generated by \eqref{eq:update-1} obey:
\[
0\leq-\mathbb{E}\left[\rho\left\langle g_{k},g_{k-1}\right\rangle _{A_{k}^{-1}}|\mathcal{F}_{k}\right]+\rho\left[f(x_{k-1})-f(x_{k})\right].
\]
\end{lem}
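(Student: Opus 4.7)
The plan is to recognize that the inner product $\langle g_k, g_{k-1}\rangle_{A_k^{-1}} = g_k^T A_k^{-1} g_{k-1}$ is essentially a directional derivative evaluated at a stochastic gradient, and then apply convexity.

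First, I would use the update rule \eqref{eq:update-1} at step $k$, namely $x_k = x_{k-1} - A_k^{-1} g_{k-1}$, to rewrite
\[
\langle g_k, g_{k-1}\rangle_{A_k^{-1}} = \langle g_k,\, A_k^{-1} g_{k-1}\rangle = \langle g_k,\, x_{k-1} - x_k\rangle.
\]
Next I would take the conditional expectation given $\mathcal{F}_k$. Because $\mathcal{F}_k$ is generated by $\xi_0, \ldots, \xi_{k-1}$, both $x_{k-1}$, $x_k$, and $A_k$ are $\mathcal{F}_k$-measurable (they depend only on $\xi_0,\ldots,\xi_{k-1}$), while $g_k = \nabla f(x_k, \xi_k)$ satisfies $\mathbb{E}[g_k\,|\,\mathcal{F}_k] = \nabla f(x_k)$. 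Therefore
\[
\mathbb{E}\bigl[\langle g_k, g_{k-1}\rangle_{A_k^{-1}} \,\big|\, \mathcal{F}_k\bigr] = \langle \nabla f(x_k),\, x_{k-1} - x_k\rangle.
\]

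Then I would invoke the convexity of $f$ (available from the standing assumption): for any two points,
\[
f(x_{k-1}) \geq f(x_k) + \langle \nabla f(x_k), x_{k-1} - x_k\rangle,
\]
which rearranges to $\langle \nabla f(x_k), x_{k-1} - x_k\rangle \leq f(x_{k-1}) - f(x_k)$. Combining this with the identity above and multiplying through by $\rho > 0$ yields
\[
\rho\,\mathbb{E}\bigl[\langle g_k, g_{k-1}\rangle_{A_k^{-1}} \,\big|\, \mathcal{F}_k\bigr] \leq \rho\bigl[f(x_{k-1}) - f(x_k)\bigr],
\]
which is exactly the claimed inequality after moving everything to one side.

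The proof is essentially a one-line application of convexity after peeling off the update rule, so there is no serious obstacle. The only thing one has to be careful about is the measurability bookkeeping, namely that $A_k$ and $g_{k-1}$ are $\mathcal{F}_k$-measurable so they can be pulled out of the conditional expectation. This follows directly from the convention that $\mathcal{F}_k = \sigma(\xi_0,\ldots,\xi_{k-1})$ and that $A_k$ is built from past gradients.
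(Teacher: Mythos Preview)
Your proof is correct and follows essentially the same route as the paper: rewrite $\langle g_k,g_{k-1}\rangle_{A_k^{-1}}=\langle g_k,x_{k-1}-x_k\rangle$ via the update rule, take conditional expectation to replace $g_k$ by $\nabla f(x_k)$, and apply convexity. The paper's argument (Lemma~\ref{lem:hypergrad-1-2} in the appendix, specialized to $m_{k-1}=g_{k-1}$) is exactly this, just without the explicit measurability commentary you included.
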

The first term is a scaling of the\textit{ hyper-gradient}; see Equation~\eqref{eqn:hyper-grad}.
Usually this quantity is used to adapt the learning rate by applying stochastic gradient descent on the learning rate, treating the hyper-parameter as a parameter (hence the name). Our approach
here is different, we introduce the term into our convergence rate
bound directly. The second term is a function value difference between
the current and previous points. This term is particularly nice as
it may be telescoped, and so it introduces an error term that doesn't
grow over time. The degree of adaptivity is controlled by $\rho$, which is a tunable parameter of our method. The theory we develop in Section~\ref{sec:rho} suggests that $\rho=2$ is the best default choice, although larger values can lead to greater adaptivity.

Using Lemma \ref{lem:hyper-lemma} and the AdaGrad result~\eqref{eq:adagrad-key-bound}, we may derive the following bound.
\begin{thm} \label{thm:telescoped-bound}
For $k\geq1$, Algorithm~\ref{alg:gradagrad-scalar} satisfies:
\begin{flalign*}
2\mathbb{E}\sum_{k=0}^{n}\left[f(x_{k})-f_{*}\right] 
~\leq~ & \left\Vert x_{0}-x_{*}\right\Vert _{A_{1}}^{2}+\rho\left[f(x_{0})-f_{*}\right]
 ~+~ \mathbb{E} \sum_{k=1}^{n}\left\Vert x_{k}-x_{*}\right\Vert _{\left(A_{k+1}-A_{k}\right)}^{2}\\
& +\mathbb{E} \sum_{k=0}^{n}\left[\left\Vert g_{k}\right\Vert _{A_{k+1}^{-1}}^{2}-\rho\left\langle g_{k-1},g_{k}\right\rangle _{A_{k}^{-1}}\right].
\end{flalign*}
\end{thm}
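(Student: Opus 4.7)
The plan is to combine the AdaGrad master inequality \eqref{eq:adagrad-key-bound}, which holds for any update of the form \eqref{eq:update-1} and therefore applies to GradaGrad unchanged, with a telescoped version of Lemma~\ref{lem:hyper-lemma}. The two bounds differ only by the inner-product correction terms $-\rho\langle g_{k-1},g_k\rangle_{A_k^{-1}}$ and the single initial constant $\rho[f(x_0)-f_*]$, so it will suffice to show that these corrections can be added to the right-hand side of \eqref{eq:adagrad-key-bound} without violating the inequality.

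First I would take total expectation on both sides of Lemma~\ref{lem:hyper-lemma} via the tower property and sum the resulting inequalities over $k=1,\dots,n$. The function-value differences telescope:
\[
\rho\sum_{k=1}^{n}\mathbb{E}\bigl[f(x_{k-1})-f(x_k)\bigr]=\rho\bigl(f(x_0)-\mathbb{E}[f(x_n)]\bigr)\leq\rho[f(x_0)-f_*],
\]
where the last step uses $f(x_n)\geq f_*$. Rearranging the summed version of Lemma~\ref{lem:hyper-lemma} in light of this bound yields
\[
\rho[f(x_0)-f_*]\;-\;\rho\,\mathbb{E}\sum_{k=1}^{n}\langle g_{k-1},g_k\rangle_{A_k^{-1}}\;\geq\;0.
\]
Adding this non-negative quantity to the right-hand side of \eqref{eq:adagrad-key-bound} and invoking the convention $g_{-1}=0$ to rewrite $\sum_{k=1}^{n}\langle g_{k-1},g_k\rangle_{A_k^{-1}}$ as $\sum_{k=0}^{n}\langle g_{k-1},g_k\rangle_{A_k^{-1}}$ (the $k=0$ summand vanishes regardless of how $A_0$ is interpreted) produces the claimed inequality after collecting the two $k$-sums into a single summation.

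The heavy lifting is already done by Lemma~\ref{lem:hyper-lemma}, which supplies the correlation/function-value telescoping identity, together with the AdaGrad master inequality; what remains is essentially bookkeeping. The only mildly delicate point is aligning the index ranges so that the $k=0$ boundary term is handled consistently, i.e., checking that the formally undefined factor $A_0^{-1}$ never contributes because it multiplies $g_{-1}=0$. Beyond that I do not anticipate any further technical obstacle.
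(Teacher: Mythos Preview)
Your proposal is correct and follows essentially the same approach as the paper: the body text explicitly says the theorem is obtained ``using Lemma~\ref{lem:hyper-lemma} and the AdaGrad result~\eqref{eq:adagrad-key-bound},'' and the appendix proof (for the more general momentum/projection variant) simply re-derives the one-step AdaGrad expansion and combines it with the hyper-gradient lemma before telescoping, which in the $\beta=0$, $\mathcal{X}=\mathbb{R}^d$ case collapses to exactly the add-a-nonnegative-quantity argument you describe. Your handling of the $k=0$ boundary via $g_{-1}=0$ is also the paper's convention.
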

Compared to the bound for AdaGrad in Equation~\eqref{eq:adagrad-key-bound}, the gradient norm error term now grows each step by a term:
\[
\left\Vert g_{k}\right\Vert _{A_{k+1}^{-1}}^{2}-\rho\left\langle g_{k},g_{k-1}\right\rangle _{A_{k}^{-1}}\quad\text{v.s.}\quad\left\Vert g_{k}\right\Vert _{A_{k+1}^{-1}}^{2}.
\]
This error term is potentially much smaller, or even negative, when
consecutive gradients are highly positively correlated. This corresponds to the
situation in hyper-gradient methods where the learning rate should be
increased. This error term is larger in the opposite situation, where
consecutive gradients are negatively correlated, and the learning rate
should be decreased. In the ``goldilocks'' zone, the hyper-gradient
term will wobble around 0, and should not contribute significantly
to the accumulated error over time.

Ideally, we would like to apply a similar inductive bound to this
error term as was applied for AdaGrad:
\[
q A_{k}+\frac{\left\Vert g_{k} \right\Vert^{2}}{A_{k+1}}-\rho\frac{\left\langle g_{k}, g_{k-1} \right\rangle}{A_{k}}\leq q A_{k+1},
\]
with $A$ given by \eqref{eqn:gradagrad-Ak}. This schema is the motivation for development of our GradaGrad method.
Applied directly we run into a number of problems that must be solved
to arrive at a practical method:
\begin{enumerate}[i)]
\item The above update can potentially yield the square root of a negative
number. The approach we develop in the following sections, shown in Figure~\ref{fig:quadroot}, steadily grows the learning rate, avoiding the divergence towards infinity encountered in the naive implementation in Equation~\eqref{eqn:gradagrad-Ak}.
\item When $A_{k}$ is allowed to both expand and shrink, generally, the iterate-distance
error term, i.e., second line in~\eqref{eq:adagrad-key-bound}, may grow at a $\mathcal{O}(n)$ rate, rather than the $\mathcal{O}(\sqrt{n})$
rate required for a non-trivial convergence rate bound. It's this term that prevents the use of arbitrary step-size sequences. Our approach maintains the $\mathcal{O}(\sqrt{n})$ rate by construction.
\item The hyper-gradient term is divided by $A_{k-1}$ not $A_{k}$, which significantly breaks the bounding approach used by AdaGrad. The bound
may be violated both when the error term is positive and negative.
\end{enumerate}

\section{The GradaGrad Method}
\label{sec:gradagrad}

In this section, we show how to address the issues listed above
through a reparametrization technique, present the full version of GradaGrad with momentum, and establish its convergence properties.

\subsection{Controlling error through reparameterization}
\label{sec:reparametrization}
The key to controlling the error terms that arise in the GradaGrad
method is the use of reparameterization of the learning rate. Like AdaGrad,
our learning rate will take the form:
\[
A_{k+1}^{-1}=\frac{\gamma_{k+1}}{\sqrt{\alpha_{k+1}}},
\]
where $\alpha_{k+1}$ is updated from $\alpha_k$ each step by adding some additional term $v_{k}$. 
In GradaGrad, we allow the numerator $\gamma_{k+1}$ to change over time, compared to the fixed numerator used in AdaGrad. The purpose of this additional flexibility is to allow us to
reparameterize the learning rate before applying the AdaGrad like additive
update to $\alpha_{k}$. We still consider coordinate-wise updates of the form:
\begin{equation}
\alpha_{k+1}=\alpha_{k}+v_{k}, \label{eq:standard}
\end{equation}
where $v_{k}=\left\Vert g_{k}\right\Vert^{2}$ in AdaGrad and 
$v_{k}=\left\Vert g_{k}\right\Vert^{2} - \rho\left\langle g_{k},g_{k-1}\right\rangle$
in GradaGrad. However, we apply these updates after a reparameterization:
\begin{equation}
\frac{\gamma_{k+1}}{\sqrt{\alpha_{k+1}^{\prime}}}=\frac{\gamma_{k}}{\sqrt{\alpha_{k}}}\label{eq:reparam},
\end{equation}
that leaves the learning rate the same but changes the effect of adding
$v_{k}$. The update then consists of $\alpha_{k+1}=\alpha_{k+1}^{\prime}+v_{k}$. We choose our reparameterization so that when $v_k$ is negative, $\alpha_{k+1}=\alpha_k$ regardless of the value of $v_k$. Solving Equation \eqref{eq:reparam} under this condition gives the update
\begin{align}
\gamma_{k+1} &  
=\gamma_{k}\sqrt{\frac{\alpha'_{k+1}}{\alpha_{k}}}
=\gamma_{k}\sqrt{\frac{\alpha_{k+1}-v_{k}}{\alpha_{k}}}
=\gamma_{k}\sqrt{\frac{\alpha_{k}-v_{k}}{\alpha_{k}}}
=\gamma_{k}\sqrt{1-\frac{v_{k}}{\alpha_{k}}}.
\label{eq:negupdate}
\end{align}
This results in the behavior observed in Figure \ref{fig:quadroot}, where the learning rates smoothly increases as $v_k$ becomes more negative. Note that this correction is only used when $v_k$ is negative, for positive $v_k$ the standard AdaGrad update in Equation~\eqref{eq:standard} is used without any reparameterization. This update is very well behaved, the learning rate matches the value from using $\alpha_{k+1} =\alpha_{k} - v_{k}$ up to a first order approximation, as the gradients match at $v_k=0$ (this can be seen in Figure \ref{fig:quadroot}). The ratio $v_k/\alpha_{k}$ is typically very small at the later stages of optimization, so its behavior differs from the naive variant only at the beginning of optimization.

\begin{figure}[t]\centering
\begin{minipage}[t]{.5\textwidth}%
\centering
\begin{algorithmic}
    \STATE {\bfseries Input:} $\gamma_0$, $\rho$, $x_0$, $\beta$, $G_{\infty}$, $D_\infty$
    \STATE Initialize $z_0 = x_0$
    \FOR{$k=0$ {\bfseries to} $n$}
        \STATE $g_{k} = \nabla f(x_{k},\xi_{k})$
        \FOR{$i=1$  {\bfseries to} $D$}
        \IF{$k=0$}
            \STATE $v_{i,k}=G_{\infty}^{2}$
        \ELSIF{$\gamma_{i,k} = D_{\infty}$}
            \STATE $v_{i,k}=g_{i,k}^{2}$
        \ELSE
            \STATE $v_{i,k}=g_{i,k}^{2}-\rho g_{i,k} m_{i,k-1}$
        \ENDIF
        \IF{$v_{i,k} \geq 0$}
            \STATE $\gamma_{i,k+1}=\gamma_{i,k}$
            \STATE $\alpha_{i,k+1}=\alpha_{i,k} + v_{i,k}$
        \ELSE
            \STATE $r_{i,k}=\left(\frac{\rho m_{i,k-1}}{g_{i,k}}\right)^{2}-1$
            \STATE $v_{i,k}=\max\left(v_{i,k},-r_{i,k}\alpha_{i,k}\right)$
            \STATE $\gamma_{i,k+1}=\min\left(\gamma_{i,k}\sqrt{1-\frac{v_{i,k}}{\alpha_{i,k}}}, \, D_{\infty}\right)$
            \STATE $\alpha_{i,k+1}=\alpha_{i,k}$
        \ENDIF
        \STATE $A_{i,k+1}=\sqrt{\alpha_{i,k+1}}/\gamma_{i,k+1}$
        \ENDFOR
        \STATE $z_{k+1}=\text{proj}_{\mathcal{X}}(z_{k}-A_{k+1}^{-1}g_{k})$
        \STATE $x_{k+1}=\beta x_{k}+\left(1-\beta\right) z_{k+1}$
        \STATE $m_{k}=A_{k+1}\left(x_{k}-x_{k+1}\right)$
    \ENDFOR
\end{algorithmic}
\captionof{algocf}{GradaGrad}\label{alg:gradagrad-diagonal}
\end{minipage}%
\vspace{-1em}
\end{figure}

\subsection{GradaGrad with momentum}
\vspace{-0.5em}\label{sec:method} The scalar version of GradaGrad is detailed in Algorithm \ref{alg:gradagrad-scalar}.
It's straight-forward to extend the method to per-coordinate adaptivity (diagonal scaling), and to include the use of momentum and projection onto a convex set $\mathcal{X}$. This full version is given in Algorithm \ref{alg:gradagrad-diagonal}. The full version includes 2 other changes needed to facilitate the analysis: firstly, the $v_0$ update uses a maximum element-wise gradient bound $G$ rather than the observed gradient $g_0$. This kind of change is also needed by other AdaGrad variants (such as AdaGrad-DA) for the theory and can usually be omitted from practical implementations of the method.

Secondly, we derive an explicit expression for the hyper-parameter $\rho$ rather than using a fixed value. This is a trade-off, as it adds some additional complexity to the method but reduces the number of hyper-parameters. The necessity of this parameter is detailed in Section \ref{sec:restricted-increase}. We establish the following $\mathcal{O}(1/\sqrt{n})$ convergence rate bound for the diagonal scaling variant, the scalar variant can be analysed using the same techniques.


\begin{thm}
\label{thm:main}Define $G_{\infty}=\max_{k}\left\Vert g_{k}\right\Vert_{\infty}$, 
$D_{\infty} \geq \sup_{x\in\mathcal{X}}\left\Vert x-x_{*}\right\Vert_{\infty}$ then the function value at the average iterate
$\bar{x}$ of GradaGrad converges at a $\mathcal{O}(1/\sqrt{n})$ rate:
\begin{align*}
\mathbb{E}\left[f(\bar{x}_{n+1})-f_{*}\right] & \leq\frac{1}{2(n+1)}\left(\frac{2\beta}{1-\beta}+\rho\right)\left[f(x_{0})-f_{*}\right] + 3\sqrt{2+\rho}\,\frac{dG_{\infty}D_{\infty}^{2}}{\gamma_{0}\sqrt{n+1}}.
\end{align*}
where $\rho>0$ is a hyper-parameter and $\beta\in[0,1)$ is the classical momentum parameter.
\end{thm}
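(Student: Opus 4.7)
The plan is to build on Theorem \ref{thm:telescoped-bound}, extending it from the scalar, momentum-free, unconstrained setting to the diagonal-scaling, projected, momentum-enabled setting of Algorithm \ref{alg:gradagrad-diagonal}, then bound each of the resulting error terms at the required $\mathcal{O}(\sqrt{n+1})$ rate and apply Jensen's inequality at the end.

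First I would restate the mirror-descent-style bound on the ghost sequence $z_k$, the iterate prior to momentum averaging, which obeys the plain projected update $z_{k+1} = \mathrm{proj}_{\mathcal{X}}(z_k - A_{k+1}^{-1} g_k)$. The standard three-point identity gives $\|z_{k+1} - x_*\|_{A_{k+1}}^2 \leq \|z_k - x_*\|_{A_{k+1}}^2 - 2\langle g_k, z_k - x_*\rangle + \|g_k\|_{A_{k+1}^{-1}}^2$. Combining with the convex combination $x_{k+1} = \beta x_k + (1-\beta)z_{k+1}$ and convexity of $f$ to relate $\langle g_k, z_k - x_*\rangle$ to $f(x_k) - f_*$ produces, via a standard momentum telescoping of the $\beta$-weighted suboptimality gap, an initial-condition contribution proportional to $\tfrac{2\beta}{1-\beta}[f(x_0)-f_*]$. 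Adding Lemma \ref{lem:hyper-lemma} coordinate-wise contributes the extra $\rho[f(x_0)-f_*]$ boundary term and replaces $\|g_k\|_{A_{k+1}^{-1}}^2$ by $\|g_k\|_{A_{k+1}^{-1}}^2 - \rho\langle g_{k-1}, g_k\rangle_{A_k^{-1}}$ in each summand, matching the form of Theorem \ref{thm:telescoped-bound}.

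The heart of the proof is then two coordinate-wise inductive bounds. The first, proved by induction on $k$ using the reparameterization \eqref{eq:negupdate}, is the gradient-error schema $qA_{i,k} + g_{i,k}^2/A_{i,k+1} - \rho g_{i,k}g_{i,k-1}/A_{i,k} \leq qA_{i,k+1}$ with $q = 2+\rho$. The explicit safeguard $r_{i,k} = (\rho m_{i,k-1}/g_{i,k})^2 - 1$ in Algorithm \ref{alg:gradagrad-diagonal} is chosen precisely to make the induction hold in the $v_{i,k}<0$ branch (issue iii of Section \ref{sec:augmenting}), while the $G_\infty^2$ seeding of $v_{i,0}$ ensures $\alpha_{i,k}>0$ and $\alpha_{i,n+1} \leq (n+1)G_\infty^2$, producing the $\sqrt{2+\rho}\,G_\infty$ factor. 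The second bound controls the iterate-distance term $\|x_k-x_*\|_{A_{k+1}-A_k}^2 \leq D_\infty^2 \sum_i (A_{i,k+1}-A_{i,k})_+$: here the cap $\gamma_{i,k+1} \leq D_\infty$ keeps $\gamma_{i,k}$ bounded in $[\gamma_0, D_\infty]$, so that the log-multiplicative identity for the $\gamma$-update telescopes and the total positive variation of $A_{i,k}$ is dominated by $\sqrt{\alpha_{i,n+1}}/\gamma_0 = \mathcal{O}(\sqrt{n+1})$. Summing over the $d$ coordinates, taking expectations, dividing by $2(n+1)$ and applying Jensen's inequality $\frac{1}{n+1}\sum_k \mathbb{E}[f(x_k)-f_*] \geq \mathbb{E}[f(\bar x_{n+1})-f_*]$ then delivers the stated bound.

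The main obstacle is the iterate-distance bound (issue ii of Section \ref{sec:augmenting}). In standard AdaGrad one simply telescopes $\sum_k (A_{k+1}-A_k) = A_{n+1}-A_1$, but here a permitted \emph{decrease} of $A_{i,k}$ (whenever $\gamma_{i,k}$ grows) can be followed by subsequent increases, so the sum of positive parts does not telescope directly and could a~priori grow as $\mathcal{O}(n)$. Showing that the $D_\infty$-cap on $\gamma_{i,k}$ together with the explicit $r_{i,k}$ formula forces the positive variation to stay at $\mathcal{O}(\sqrt{n+1})$ — via a case analysis on $\mathrm{sign}(v_{i,k})$ and a multiplicative telescoping of the $\gamma_{i,k}$ ratios — is the technical crux.
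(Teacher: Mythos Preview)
Your proposal follows the same overall architecture as the paper's proof, and the outline is sound: extend Theorem~\ref{thm:telescoped-bound} to the momentum/projection setting via the $z_k$ sequence, bound the gradient-error and iterate-distance sums coordinate-wise, then apply Jensen. There are, however, two concrete points where your plan diverges from what actually makes the argument close.

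First, the gradient-error induction constant is not $q=2+\rho$. In the paper's Lemma~\ref{lem:grecursive_bound} the required condition is $q\geq 2\sqrt{2+\rho}\,\gamma_{k+1}^{2}$; the factor $\gamma_{k+1}^{2}$ appears because $A_{k+1}=\sqrt{\alpha_{k+1}}/\gamma_{k+1}$, and it is precisely the $D_{\infty}$-cap on $\gamma_{i,k}$ that lets one take $q=2\sqrt{2+\rho}\,D_{\infty}^{2}$ uniformly. Without this you cannot recover the $3\sqrt{2+\rho}\,D_{\infty}^{2}$ constant in the statement. Also, in the $v_{i,k}<0$ branch the paper does not maintain the same recursive inequality; instead the clipping choice $r_{i,k}$ forces the raw error term $g_{i,k}^{2}/A_{i,k+1}-\rho g_{i,k}m_{i,k-1}/A_{i,k}\leq 0$ outright (Lemma~\ref{lem:r}), and then the residual $q(A_{i,k}-A_{i,k+1})$ is telescoped separately over only the $v_{i,k}>0$ steps (Lemma~\ref{lem:gerror_bound}). (Minor: in the momentum setting the inner product is with $m_{k-1}$, not $g_{k-1}$.)

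Second, you overstate the difficulty of the iterate-distance bound and propose machinery that is not needed. The reparametrization in Section~\ref{sec:reparametrization} is engineered so that $\alpha_{i,k}$ is \emph{monotone non-decreasing}: it stays constant whenever $v_{i,k}<0$ and increases by $v_{i,k}$ otherwise. Hence one does not telescope on $A_{i,k}$ or on $\gamma$-ratios at all. On steps with $v_{i,k}<0$ the summand $\|z_k-x_*\|_{A_{k+1}-A_k}^{2}$ is non-positive and is simply dropped; on steps with $v_{i,k}\geq 0$ one has $\gamma_{i,k+1}=\gamma_{i,k}\geq\gamma_0$, so
\[
A_{i,k+1}-A_{i,k}=\frac{\sqrt{\alpha_{i,k+1}}-\sqrt{\alpha_{i,k}}}{\gamma_{i,k}}\leq\frac{\sqrt{\alpha_{i,k+1}}-\sqrt{\alpha_{i,k}}}{\gamma_0},
\]
which telescopes directly to $\sqrt{\alpha_{i,n+1}}/\gamma_0$ (Lemma~\ref{lem:iterate-bound-lemma}). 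No positive-variation argument or multiplicative $\gamma$ telescoping is required; the ``technical crux'' you anticipate dissolves once you track $\sqrt{\alpha_{i,k}}$ rather than $A_{i,k}$.
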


\subsection{The restricted increase constraint}
\label{sec:restricted-increase} As mentioned in point (iii), care must be taken to control the growth of the gradient noise term:
\[
\left\Vert g_{k}\right\Vert _{A_{k+1}^{-1}}^{2}-\rho\left\langle g_{k-1},g_{k}\right\rangle _{A_{k}^{-1}}.
\]
The AdaGrad approach to bounding the accumulated error:
\[
q A_{k}+\left\Vert g_{k}\right\Vert _{A_{k+1}^{-1}}^{2}-\rho\left\langle g_{k-1},g_{k}\right\rangle _{A_{k}^{-1}}\leq q A_{k+1}.
\]
breaks down when 
$v_{k}=\left\Vert g_{k}\right\Vert^{2} - \rho\left\langle g_{k},g_{k-1}\right\rangle$
is negative,
as this inequality
can become impossible to satisfy for any choice of $A_{k}\prec A_{k-1}$,
preventing us from using increasing learning rate sequences. Instead,
we will impose the following restriction:
\begin{equation}
\left\Vert g_{k}\right\Vert _{A_{k+1}^{-1}}^{2}-\rho\left\langle g_{k-1},g_{k}\right\rangle _{A_{k}^{-1}}\leq0.\label{eq:errnegativity}
\end{equation}
This is not always satisfied, even when $\left\Vert g_{k}\right\Vert ^{2}-\rho\left\langle g_{k-1},g_{k}\right\rangle \leq0$,
as the gradient norm square term may grow very large if the learning
rate $A_{k}^{-1}$ is allowed to increase significantly between steps. By rearranging terms, we see that this bound is satisfied when the ratio of  $A_{k+1}$ to $A_{k}$ is controlled:
\[
A_{k+1}A_{k}^{-1}\succeq\frac{\left\Vert g_{k}\right\Vert ^{2}}{\rho\left\langle g_{k-1},g_{k}\right\rangle }.
\]
We satisfy this constraint by using the following update in the GradaGrad method:
\begin{lem} \label{lem:hbound} Inequality \eqref{eq:errnegativity} is satisfied if we clip $v_k$ using:
\[
v_{k}=\max\left(v_{k},-r\alpha_{k}\right) \,\,\text{where}\,\, r=\left(\frac{\rho\left\langle g_{k-1},g_{k}\right\rangle }{\left\Vert g_{k}\right\Vert ^{2}}\right)^{2}-1.
\]
\end{lem}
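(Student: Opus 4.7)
The plan is to reduce the target inequality \eqref{eq:errnegativity} to a lower bound on the ratio $A_{k+1}/A_{k}$, translate that ratio bound into a lower bound on $v_{k}$ using the reparameterization formula \eqref{eq:negupdate}, and verify that the clipping rule is exactly this lower bound. Since $A_{k}$ is diagonal in Algorithm~\ref{alg:gradagrad-diagonal}, everything splits coordinate-wise, so I will carry out the argument scalar-wise and note that the diagonal case follows verbatim with $m_{i,k-1}$ in place of $g_{i,k-1}$.

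First I would observe that the clipping rule is only invoked on the $v_{k}<0$ branch, i.e.\ when $\rho\langle g_{k-1},g_{k}\rangle>\|g_{k}\|^{2}\geq 0$. In that regime both sides of \eqref{eq:errnegativity} have the same sign after a rearrangement, so the inequality is equivalent to
\[
\frac{A_{k+1}}{A_{k}}\;\geq\;\frac{\|g_{k}\|^{2}}{\rho\langle g_{k-1},g_{k}\rangle},
\]
as already displayed in the text, with both sides strictly positive. This is the only place where the sign of $\rho\langle g_{k-1},g_{k}\rangle$ matters; outside this branch the clipping is not triggered.

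Next I would plug in the GradaGrad reparameterization. On the $v_{k}<0$ branch we have $\alpha_{k+1}=\alpha_{k}$ and, by \eqref{eq:negupdate}, $\gamma_{k+1}=\gamma_{k}\sqrt{1-v_{k}/\alpha_{k}}$, so $A_{k+1}/A_{k}=(\sqrt{\alpha_{k+1}}/\gamma_{k+1})(\gamma_{k}/\sqrt{\alpha_{k}})=1/\sqrt{1-v_{k}/\alpha_{k}}$. Substituting this into the ratio inequality and squaring (legitimate since both sides are positive) gives
\[
1-\frac{v_{k}}{\alpha_{k}}\;\leq\;\left(\frac{\rho\langle g_{k-1},g_{k}\rangle}{\|g_{k}\|^{2}}\right)^{2},
\]
which rearranges to $v_{k}\geq -r\alpha_{k}$ with $r$ exactly as defined in the lemma statement. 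Note $r\geq 0$ on this branch, so $-r\alpha_{k}\leq 0$ and the clipping never pushes $v_{k}$ into the positive region, keeping us consistent with the negative-branch update.

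Finally I would invoke the clipping $v_{k}=\max(v_{k},-r\alpha_{k})$, which by construction enforces $v_{k}\geq -r\alpha_{k}$, and trace the equivalences back up to conclude \eqref{eq:errnegativity}. There is no real obstacle beyond sign bookkeeping: the single delicate point is justifying that squaring preserves the inequality direction, which hinges on $\rho\langle g_{k-1},g_{k}\rangle>0$ in the branch where clipping is applied. The extension to diagonal $A_{k+1}$ is immediate since the argument is per coordinate.
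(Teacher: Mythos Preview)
Your proposal is correct and follows essentially the same route as the paper's proof (Lemma~\ref{lem:r} in the appendix): reduce \eqref{eq:errnegativity} to the ratio condition $A_{k+1}/A_{k}\geq h_{k}$ with $h_{k}=\|g_{k}\|^{2}/(\rho\langle g_{k-1},g_{k}\rangle)$, compute $A_{k+1}/A_{k}=1/\sqrt{1-v_{k}/\alpha_{k}}$ from the negative-branch reparameterization, and solve for the threshold on $v_{k}$. Your sign bookkeeping (noting $\rho\langle g_{k-1},g_{k}\rangle>0$ on this branch to justify the squaring, and $r\geq 0$ so the clip stays in the negative branch) is in fact more explicit than the paper's own derivation.
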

\subsection{The \texorpdfstring{$\rho$}{rho} hyper-parameter}
\vspace{-0.5em}\label{sec:rho}There are two values of $\rho$ for which the update has special properties
worth further discussion. Consider the use of $\rho=1$. The $\alpha_{k}$
sequence (without reparameterization) can be written as 
\begin{align*}
\alpha_{n+1} & =g_{0}^{2}+\sum_{k=1}^{n}\left[g_{k}^{2}-g_{k}g_{k-1}\right] =\frac{1}{2}g_{0}^{2}+\frac{1}{2}g_{n}^{2}+\frac{1}{2}\sum_{k=0}^{n-1}\left(g_{k+1}-g_{k}\right)^{2}.
\end{align*}
Although individual steps may increase or decrease the rate, the overall growth will be dominated by the summation term, and so GradaGrad with $\rho=1$ will not result in significant learning rate growth. The accumulation of differences of gradients instead of gradients has been explored before, under the name of optimistic online gradient descent \citep{predictable_sequences, bachlevy2019, antonakopoulos2021adaptive}.

This method is
still of interest as it can be used as a drop-in replacement for AdaGrad,
without the need for the complexities of reparameterization that larger
$\rho$ require. 
It can behave quite differently than AdaGrad, for
instance on the absolute value function example in Figure \ref{fig:abs-figure}, it will
make much more rapid progress, the learning rate initially stays constant
rather than decreasing as $g_{k+1}-g_{k}$ is zero until it overshoots
the minimum, and starts decreasing from there.

The case of $\rho=2$ is the most natural as it results in simplifications of the constants in the bounds. We would recommend $\rho=2$ as the default choice in the GradaGrad method, as it results in adaptivity without introducing excessive instability during training.

\begin{figure*}[t]
\centering\includegraphics[width=0.99\textwidth]{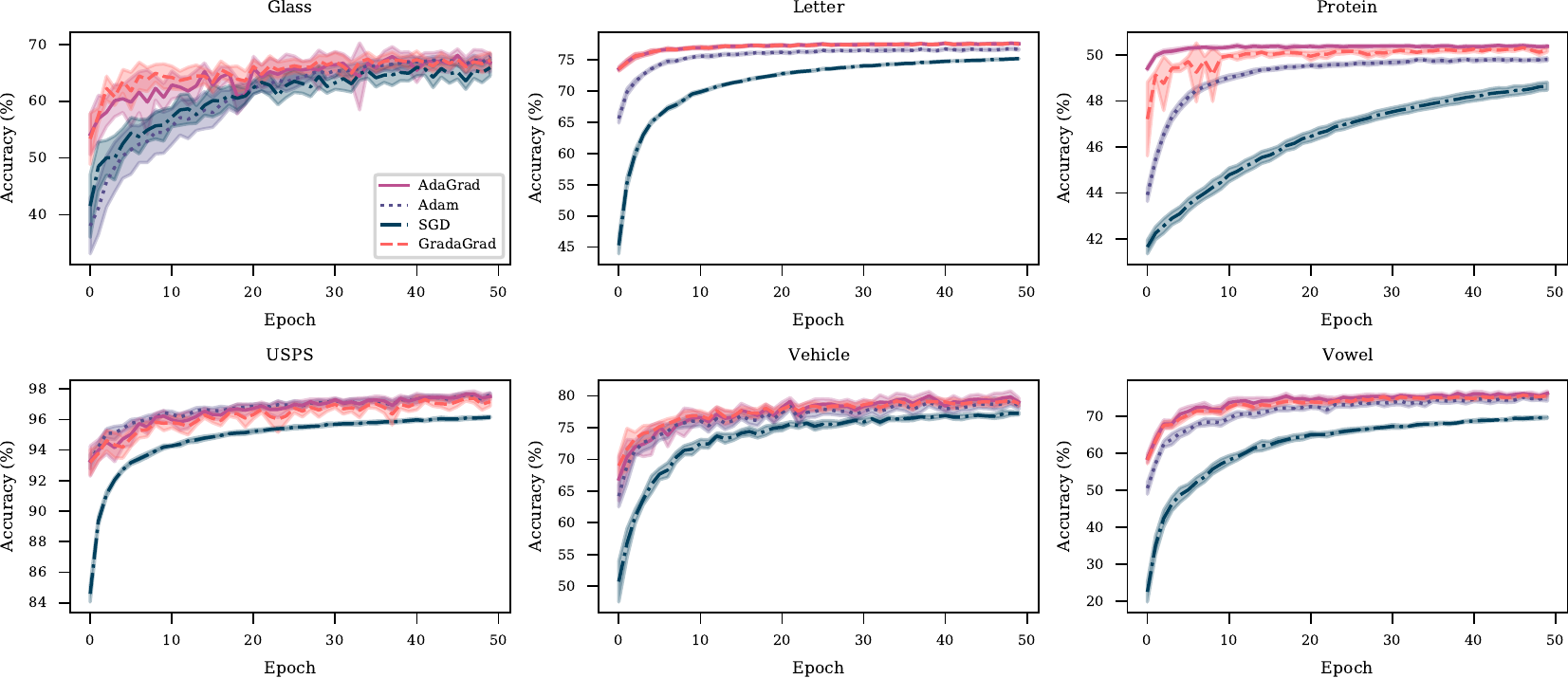}
\caption{\label{fig:convex-figure}GradaGrad matches the training accuracy of benchmark methods without tuning (with default hyper-parameters $\gamma_0=1$ and $\rho=2$).}
\end{figure*}

\begin{figure*}[t]
\centering\includegraphics[width=0.99\textwidth]{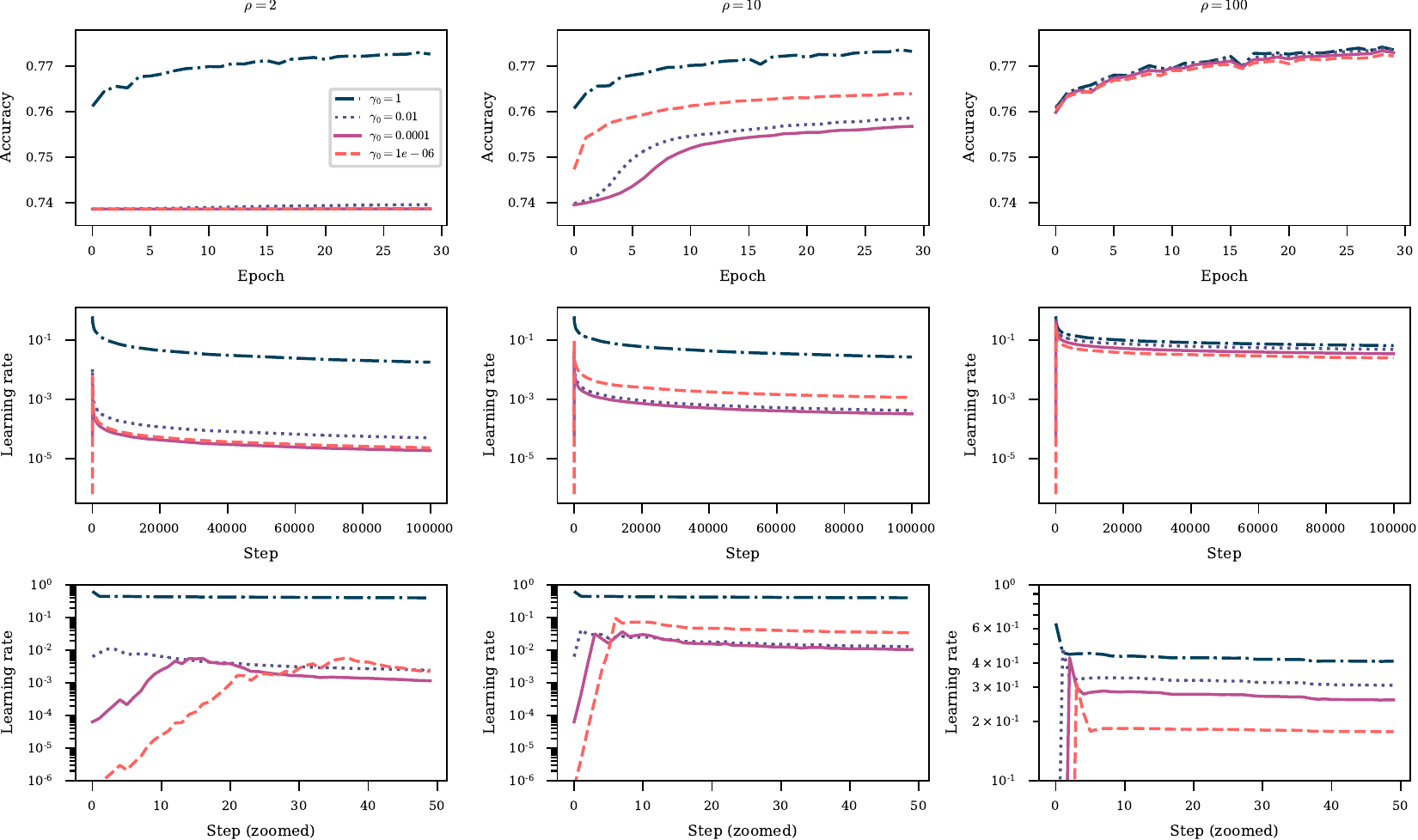}
\caption{\label{fig:mult-comparison} $\rho$ has a significant effect on the adaptability of Scalar GradaGrad on large deep learning problems such as DLRM. Our recommended default value of $\rho=2$ is highly stable. Larger values of $\rho$ give a higher degree of adaptivity but may result in less stable training.}\vspace{-1.5em}
\end{figure*}

\section{Experiments}%
\label{sec:experiments} We evaluated GradaGrad on 6 baseline problems widely used in the optimization literature \citep{chang2011libsvm, Dua:2019}): Glass, Letter, Protein, USPS, Vehicle and Vowel. We tested a binary logistic regression model, with no further data preprocessing, using the scaled versions of each dataset (when applicable) retrieved from the LIBSVM dataset repository. We compare our method against Adam, SGD and AdaGrad, where for each method the learning rate is chosen via grid search on a power-of-2 grid to give the highest final  accuracy averaged over the last 10 epochs. This tuning is extremely advantageous to these baselines methods, and provides a difficult benchmark to beat. We plot the average of 10 runs with different random seeds, with a 2 standard-error range overlaid. As shown in Figure \ref{fig:convex-figure}, GradaGrad is able to match these baseline methods using the same default hyper-parameters of $\gamma_0=1.0$, and $\rho=2$ across all problems.

To illustrate the effect of the $\rho$ hyperparameter, we performed experiments on the DLRM \citep{DLRM19} model on the  Criteo Kaggle Display Advertising Challenge Dataset, using the scalar variant of GradaGrad. This model has more than 300 million parameters and is representative of models used in industry. Batch size 128 and embedding dimension 16 were used, with no regularization.  Whereas the diagonal scaling variant of GradaGrad performs well with $\rho=2$, the scalar variant requires larger $\rho$ for large or high noise problems. Figure \ref{fig:mult-comparison} shows the results of using $\gamma_0$ equal to 1, 1e-2, 1e-4, 1e-6 with different $\rho$ values. Values of $\rho=2$ and $\rho=10$ show a significant lack of adaptivity, whereas for large $\rho$ each of these rates performs similarly. 

\section{Discussion}
\vspace{-0.3em}\label{sec:discussion}Although GradaGrad's learning rate is more adaptive than that of AdaGrad, it still inherits many of the same limitations. The ``any-time'' sequences that have a square-root growth denominator give optimal rates up to a constant factor, but they sometimes give noticeably worse rates in practice than more heavily hand-tuned learning rate schedules such as MADGRAD. \citep{defazio2021adaptivity}.

The GradaGrad method is also limited in its ability to adjust $\gamma$. The updates only increase $\gamma$, so the initial $\gamma_0$ must be no larger than the optimal $\gamma$. The GradaGrad update allows $\gamma$ to grow at an exponential rate, which means that small initial values can potentially be used. Care must be taken when initializing with $\gamma_0$ that are very small as numerical issues may come into play. Values below roughly $10^{-6}$ will likely result in loss of precision and no adaptivity when using single precision floating point.

\section*{Conclusion}
\vspace{-0.3em}The GradaGrad method has promising theoretical and practical properties that make it a viable drop-in replacement for the AdaGrad method. It exhibits greater adaptivity to the gradient sequence as it is able to exploit correlations between consecutive gradients, going beyond the simple gradient magnitudes used in the AdaGrad method. There are many open problems relating to the $v_k$ sequence we described. Can the same sequence be used in other AdaGrad family methods? We believe that $v_k$ sequences of the form we describe here could have widespread applications. Can the upper bound on $\gamma$ be removed? It does not appear to be necessary in practice.

\clearpage
\bibliography{adaptive2022}

\begin{thebibliography}{28}
\providecommand{\natexlab}[1]{#1}
\providecommand{\url}[1]{\texttt{#1}}
\expandafter\ifx\csname urlstyle\endcsname\relax
  \providecommand{\doi}[1]{doi: #1}\else
  \providecommand{\doi}{doi: \begingroup \urlstyle{rm}\Url}\fi

\bibitem[Antonakopoulos \& Mertikopoulos(2021)Antonakopoulos and
  Mertikopoulos]{antonakopoulos2021adaptive}
Antonakopoulos, K. and Mertikopoulos, P.
\newblock Adaptive first-order methods revisited: Convex minimization without
  lipschitz requirements.
\newblock In Beygelzimer, A., Dauphin, Y., Liang, P., and Vaughan, J.~W.
  (eds.), \emph{Advances in Neural Information Processing Systems}, 2021.

\bibitem[Bach \& Levy(2019)Bach and Levy]{bachlevy2019}
Bach, F.~R. and Levy, K.~Y.
\newblock A universal algorithm for variational inequalities adaptive to
  smoothness and noise.
\newblock In Beygelzimer, A. and Hsu, D. (eds.), \emph{Conference on Learning
  Theory, {COLT} 2019}, volume~99 of \emph{Proceedings of Machine Learning
  Research}, pp.\  164--194. {PMLR}, 2019.

\bibitem[Baydin et~al.(2018)Baydin, Cornish, Rubio, Schmidt, and
  Wood]{Baydin18hypergradient}
Baydin, A.~G., Cornish, R., Rubio, D.~M., Schmidt, M., and Wood, F.
\newblock Online learning rate adaptation with hypergradient descent.
\newblock In \emph{Proceedings of the Sixth International Conference on
  Learning Representations (ICLR)}, Vancouver, Canada, 2018.

\bibitem[Chang \& Lin(2011)Chang and Lin]{chang2011libsvm}
Chang, C.-C. and Lin, C.-J.
\newblock Libsvm: A library for support vector machines.
\newblock \emph{ACM transactions on intelligent systems and technology (TIST)},
  2\penalty0 (3):\penalty0 27, 2011.

\bibitem[Defazio \& Jelassi(2021)Defazio and Jelassi]{defazio2021adaptivity}
Defazio, A. and Jelassi, S.
\newblock Adaptivity without compromise: A momentumized, adaptive, dual
  averaged gradient method for stochastic optimization, 2021.

\bibitem[Delyon \& Juditsky(1993)Delyon and Juditsky]{DelyonJuditsky93}
Delyon, B. and Juditsky, A.
\newblock Accelerated stochastic approximation.
\newblock \emph{SIAM Journal on Optimization}, 3\penalty0 (4):\penalty0
  868--881, 1993.

\bibitem[Dua \& Graff(2017)Dua and Graff]{Dua:2019}
Dua, D. and Graff, C.
\newblock {UCI} machine learning repository, 2017.
\newblock URL \url{http://archive.ics.uci.edu/ml}.

\bibitem[Duchi et~al.(2011)Duchi, Hazan, and Singer]{adagrad}
Duchi, J., Hazan, E., and Singer, Y.
\newblock Adaptive subgradient methods for online learning and stochastic
  optimization.
\newblock \emph{Journal of Machine Learning Research}, 12\penalty0
  (61):\penalty0 2121--2159, 2011.

\bibitem[Jacobs(1988)]{Jacobs88DBD}
Jacobs, R.~A.
\newblock Increased rates of convergence through learning rate adaption.
\newblock \emph{Neural Networks}, 1:\penalty0 295--307, 1988.

\bibitem[Kesten(1958)]{Kesten58}
Kesten, H.
\newblock Accelerated stochastic approximation.
\newblock \emph{Annals of Mathematical Statistics}, 29\penalty0 (1):\penalty0
  41--59, 1958.

\bibitem[Kingma \& Ba(2014)Kingma and Ba]{kingma2014adam}
Kingma, D.~P. and Ba, J.
\newblock Adam: A method for stochastic optimization.
\newblock \emph{Proceedings of the 3rd International Conference on Learning
  Representations (ICLR)}, 2014.

\bibitem[Li \& Orabona(2019)Li and Orabona]{li_orabona_2019}
Li, X. and Orabona, F.
\newblock On the convergence of stochastic gradient descent with adaptive
  stepsizes.
\newblock In \emph{Proceedings of the 22nd International Conference on
  Artificial Intelligence and Statistics (AISTATS) 2019}, 2019.

\bibitem[Loizou et~al.(2021)Loizou, Vaswani, Laradji, and
  Lacoste-Julien]{stochastic_polyak}
Loizou, N., Vaswani, S., Laradji, I., and Lacoste-Julien, S.
\newblock Stochastic polyak step-size for sgd: An adaptive learning rate for
  fast convergence.
\newblock \emph{Proceedings of the 24th International Conference on Artifi-
  cial Intelligence and Statistics (AISTATS) 2021}, 2021.

\bibitem[Mahmood et~al.(2012)Mahmood, Sutton, Degris, and
  Pilarski]{MahmoodSutton12TuningFree}
Mahmood, A.~R., Sutton, R.~S., Degris, T., and Pilarski, P.~M.
\newblock Tuning-free step-size adaption.
\newblock In \emph{Proceedings of the IEEE International Conference on
  Acoustics, Speech and Signal Processing (ICASSP)}, pp.\  2121--2124, 2012.

\bibitem[Malitsky \& Mishchenko(2019)Malitsky and
  Mishchenko]{malitsky2019adaptive}
Malitsky, Y. and Mishchenko, K.
\newblock Adaptive gradient descent without descent.
\newblock \emph{arXiv preprint arXiv:1910.09529}, 2019.

\bibitem[Mirzoakhmedov \& Uryasev(1983)Mirzoakhmedov and
  Uryasev]{MirzoakhmedovUryasev83}
Mirzoakhmedov, F. and Uryasev, S.~P.
\newblock Adaptive step adjustment for a stochastic optimization algorithm.
\newblock \emph{Zh. Vychisl. Mat. Mat. Fiz.}, 23\penalty0 (6):\penalty0
  1314--1325, 1983.
\newblock [U.S.S.R. Comput. Math. Math. Phys. {23:6}, 1983].

\bibitem[Naumov et~al.(2019)Naumov, Mudigere, Shi, Huang, Sundaraman, Park,
  Wang, Gupta, Wu, Azzolini, Dzhulgakov, Mallevich, Cherniavskii, Lu,
  Krishnamoorthi, Yu, Kondratenko, Pereira, Chen, Chen, Rao, Jia, Xiong, and
  Smelyanskiy]{DLRM19}
Naumov, M., Mudigere, D., Shi, H.~M., Huang, J., Sundaraman, N., Park, J.,
  Wang, X., Gupta, U., Wu, C., Azzolini, A.~G., Dzhulgakov, D., Mallevich, A.,
  Cherniavskii, I., Lu, Y., Krishnamoorthi, R., Yu, A., Kondratenko, V.,
  Pereira, S., Chen, X., Chen, W., Rao, V., Jia, B., Xiong, L., and
  Smelyanskiy, M.
\newblock Deep learning recommendation model for personalization and
  recommendation systems.
\newblock \emph{CoRR}, abs/1906.00091, 2019.

\bibitem[Orabona \& Tommasi(2017)Orabona and Tommasi]{CoinBetting2017NIPS}
Orabona, F. and Tommasi, T.
\newblock Training deep networks without learning rates through coin betting.
\newblock In \emph{Advances in Neural Information Processing Systems},
  volume~30. Curran Associates, Inc., 2017.

\bibitem[Paquette \& Scheinberg(2020)Paquette and
  Scheinberg]{PaquetteScheinberg2020}
Paquette, C. and Scheinberg, K.
\newblock A stochastic line search method with expected coplexity analysis.
\newblock \emph{SIAM Journal on Optimization}, 30\penalty0 (1):\penalty0
  349--376, 2020.

\bibitem[Rakhlin \& Sridharan(2013)Rakhlin and
  Sridharan]{predictable_sequences}
Rakhlin, A. and Sridharan, K.
\newblock Online learning with predictable sequences.
\newblock In \emph{Advances in Neural Information Processing Systems}, 2013.

\bibitem[Reddi et~al.(2019)Reddi, Kale, and Kumar]{reddi2019convergence}
Reddi, S.~J., Kale, S., and Kumar, S.
\newblock On the convergence of adam and beyond.
\newblock \emph{e-Preprint arXiv:1904.09237}, 2019.

\bibitem[Schaul et~al.(2013)Schaul, Zhang, and LeCun]{NoMorePesky2013}
Schaul, T., Zhang, S., and LeCun, Y.
\newblock No more pesky learning rates.
\newblock In \emph{Proceedings of the 30th International Conference on Machine
  Learning}, volume~28 of \emph{Proceedings of Machine Learning Research}, pp.\
   343--351. PMLR, 2013.

\bibitem[Schraudolph(1999)]{Schraudolph1999}
Schraudolph, N.~N.
\newblock Local gain adaptation in stochastic gradient descent.
\newblock In \emph{Proceedings of Nineth International Conference on Artificial
  Neural Networks (ICANN)}, pp.\  569--574, 1999.

\bibitem[Sutton(1992)]{Sutton92IDBD}
Sutton, R.~S.
\newblock Adapting bias by gradient descent: An incremental version of
  {Delta-Bar-Delta}.
\newblock In \emph{Proceedings of the Tenth National Conference on Artificial
  Intelligence (AAAI'92)}, pp.\  171--176. The MIT Press, 1992.

\bibitem[Tieleman \& Hinton(2012)Tieleman and
  Hinton]{TielemanHinton2012lecture}
Tieleman, T. and Hinton, G.
\newblock Lecture 6.5-rmsprop: Divide the gradient by a running average of its
  recent magnitude.
\newblock \emph{COURSERA: Neural networks for machine learning}, 4\penalty0
  (2):\penalty0 26--31, 2012.

\bibitem[van Erven \& Koolen(2016)van Erven and Koolen]{metagrad}
van Erven, T. and Koolen, W.~M.
\newblock Metagrad: Multiple learning rates in online learning.
\newblock In \emph{Advances in Neural Information Processing Systems}, 2016.

\bibitem[Vaswani et~al.(2019)Vaswani, Mishkin, Laradji, Schmidt, Gidel, and
  Lacoste-Julien]{Vaswani2019linesearch}
Vaswani, S., Mishkin, A., Laradji, I., Schmidt, M., Gidel, G., and
  Lacoste-Julien, S.
\newblock Painless stochastic gradient: Interpolation, line-search, and
  convergence rates.
\newblock In \emph{Advances in Neural Information Processing Systems},
  volume~32. Curran Associates, Inc., 2019.

\bibitem[Zhang et~al.(2020)Zhang, Lang, Liu, and Xiao]{ZhangLangLiuX2020}
Zhang, P., Lang, H., Liu, Q., and Xiao, L.
\newblock Statistical adaptive stochastic gradient methods.
\newblock e-Preprint: arXiv:2002.10597, 2020.

\end{thebibliography}
\bibliographystyle{icml2022}

\clearpage
\appendix
\section{Convergence Theory}
We define the ancillary quantity $z^{\prime}$, so that the update
in Algorithm~\ref{alg:gradagrad-diagonal} may be written as
\begin{align}
z_{k+1}^{\prime} & =z_{k}-A_{k+1}^{-1}g_{k}, \nonumber \\
z_{k+1} & =\text{proj}_{\mathcal{X}}(z_{k+1}^{\prime}), \nonumber \\
x_{k+1} & =\beta x_{k}+\left(1-\beta\right)z_{k+1}. \label{eq:update}
\end{align}
Then define $m_{k}$ via
\[
x_{k+1}=x_{k}-A_{k+1}^{-1}m_{k}.
\]
Note that $m_{k}=A_{k}\left(x_{k}-x_{k+1}\right)$. Note also that:
\[
z_{k}=\frac{1}{1-\beta}x_{k}-\frac{\beta}{1-\beta}x_{k-1}.
\]

\begin{lem}
\label{lem:hypergrad-1-2}For $k\geq1$:
\[
\mathbb{E}_{\xi_{k}}\left\langle g_{k},m_{k-1}\right\rangle _{A_{k}^{-1}}\leq\left[f(x_{k-1})-f(x_{k})\right].
\]
\end{lem}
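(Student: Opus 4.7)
The plan is to recognize that the inner product collapses to a simple directional derivative along $x_{k-1}-x_k$, after which the statement becomes a one-line consequence of convexity.

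First I would unwind the definition of $m_{k-1}$. From the appendix setup, $m_{k-1}=A_{k}(x_{k-1}-x_{k})$, so $A_{k}^{-1}m_{k-1}=x_{k-1}-x_{k}$, and therefore
\[
\langle g_{k},m_{k-1}\rangle_{A_{k}^{-1}} = g_{k}^{T}A_{k}^{-1}m_{k-1} = \langle g_{k},\,x_{k-1}-x_{k}\rangle.
\]
The point of this rewriting is that the scaling matrix $A_{k}^{-1}$ and the momentum-encoded vector $m_{k-1}$ disappear from the bound we have to prove.

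Next I would take the conditional expectation. All of $x_{k-1}$, $x_{k}$, $A_{k}$, and hence $m_{k-1}$, are $\mathcal{F}_{k}$-measurable (they depend only on $\xi_{0},\dots,\xi_{k-1}$), so the vector $x_{k-1}-x_{k}$ can be pulled outside the expectation. Using the unbiasedness assumption on the stochastic gradient, $\mathbb{E}_{\xi_{k}}[g_{k}]=\nabla f(x_{k})$, this gives
\[
\mathbb{E}_{\xi_{k}}\langle g_{k},m_{k-1}\rangle_{A_{k}^{-1}} = \langle \nabla f(x_{k}),\,x_{k-1}-x_{k}\rangle.
\]

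Finally I would invoke convexity of $f$ at $x_{k}$:
\[
f(x_{k-1}) \geq f(x_{k}) + \langle \nabla f(x_{k}),\,x_{k-1}-x_{k}\rangle,
\]
which rearranges to exactly the claimed upper bound $f(x_{k-1})-f(x_{k})$.

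There is essentially no obstacle here; the only subtlety worth flagging is the measurability bookkeeping needed to pull $A_{k}^{-1}m_{k-1}$ out of $\mathbb{E}_{\xi_{k}}[\,\cdot\,]$, which is clean because $A_{k}$ is built from $g_{0},\dots,g_{k-1}$ and $m_{k-1}$ from iterates with index at most $k$. Note that the lemma is an almost exact analogue of Lemma~\ref{lem:hyper-lemma}, with the plain gradient $g_{k-1}$ replaced by its momentum/projection counterpart $m_{k-1}$; the proof strategy is identical, just routed through the identity $m_{k-1}=A_{k}(x_{k-1}-x_{k})$ rather than through an explicit gradient step.
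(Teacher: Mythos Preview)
Your proposal is correct and mirrors the paper's proof exactly: the paper also rewrites $\langle g_k,m_{k-1}\rangle_{A_k^{-1}}=\langle g_k,x_{k-1}-x_k\rangle$ via $x_k=x_{k-1}-A_k^{-1}m_{k-1}$, passes the conditional expectation onto $g_k$ to obtain $\langle\nabla f(x_k),x_{k-1}-x_k\rangle$, and finishes with the convexity inequality. Your extra remark on measurability is a welcome clarification but otherwise the arguments are identical.
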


\begin{proof}
Using $x_{k}=x_{k-1}-A_{k}^{-1}m_{k-1}$ , 
\begin{align*}
\mathbb{E}_{\xi_{k}}\left\langle g_{k},m_{k-1}\right\rangle_{A_{k}^{-1}}  & =\mathbb{E}_{\xi_{k}}\left\langle g_{k},x_{k-1}-x_{k}\right\rangle \\
 & =\left\langle \nabla f(x_{k}),x_{k-1}-x_{k}\right\rangle \\
 & \leq f(x_{k-1})-f(x_{k}).
\end{align*}
\end{proof}
\begin{lem}
\label{lem:onestep}Consider $k\geq1$ and $\rho\geq1$. Then the
iterate sequence generated by Equation \eqref{eq:update} obeys the
following bound:
\begin{align*}
2\left[f(x_{k})-f_{*}\right] & \leq\mathbb{E}_{\xi_{k}}\left\Vert z_{k}-x_{*}\right\Vert _{A_{k+1}}^{2}-\mathbb{E}_{\xi_{k}}\left\Vert z_{k+1}-x_{*}\right\Vert _{A_{k+1}}^{2} \\
& -\left(\frac{2\beta}{1-\beta}+\rho\right)\left[f(x_{k})-f_{*}\right]+\left(\frac{2\beta}{1-\beta}+\rho\right)\left[f(x_{k-1})-f_{*}\right]\\
 & +\mathbb{E}_{\xi_{k}}\left[\left\Vert g_{k}\right\Vert _{A_{k+1}^{-1}}^{2}-\rho\left\langle g_{k},m_{k-1}\right\rangle _{A_{k}^{-1}}\right].
\end{align*}
\end{lem}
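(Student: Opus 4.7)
The plan is to combine three ingredients: the standard one-step bound for projected descent on the auxiliary sequence $z_k$, a convexity manipulation that converts the inner product $\langle \nabla f(x_k), z_k - x_*\rangle$ into function value differences via the momentum identity $z_k = \tfrac{1}{1-\beta}x_k - \tfrac{\beta}{1-\beta}x_{k-1}$, and the hypergradient bound from Lemma~\ref{lem:hypergrad-1-2} to introduce the $-\rho\langle g_k, m_{k-1}\rangle_{A_k^{-1}}$ term.

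First I would expand $\|z_{k+1}' - x_*\|_{A_{k+1}}^2$ using $z_{k+1}' = z_k - A_{k+1}^{-1}g_k$, apply non-expansiveness of the projection under the $A_{k+1}$-norm to replace $z_{k+1}'$ by $z_{k+1}$, and rearrange to obtain
\[
2\langle g_k, z_k - x_*\rangle \leq \|z_k - x_*\|_{A_{k+1}}^2 - \|z_{k+1} - x_*\|_{A_{k+1}}^2 + \|g_k\|_{A_{k+1}^{-1}}^2.
\]
Taking $\mathbb{E}_{\xi_k}$ replaces $g_k$ on the left by $\nabla f(x_k)$ since $z_k$ is $\mathcal{F}_k$-measurable.

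Next I would substitute the identity $z_k - x_* = (x_k - x_*) + \tfrac{\beta}{1-\beta}(x_k - x_{k-1})$ into the left-hand inner product and apply convexity twice:
\[
\langle \nabla f(x_k), x_k - x_*\rangle \geq f(x_k) - f_*, \qquad \langle \nabla f(x_k), x_k - x_{k-1}\rangle \geq f(x_k) - f(x_{k-1}).
\]
This yields a lower bound of $2[f(x_k)-f_*] + \tfrac{2\beta}{1-\beta}[f(x_k) - f(x_{k-1})]$ on $2\mathbb{E}_{\xi_k}\langle \nabla f(x_k), z_k - x_*\rangle$. Splitting $f(x_k) - f(x_{k-1}) = [f(x_k) - f_*] - [f(x_{k-1}) - f_*]$ and moving it to the right-hand side already produces the $\tfrac{2\beta}{1-\beta}$ portion of the stated coefficients.

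Finally, to obtain the $\rho$ portion, I would add to the right-hand side the non-negative quantity from Lemma~\ref{lem:hypergrad-1-2}, namely
\[
0 \leq \rho[f(x_{k-1}) - f(x_k)] - \rho\,\mathbb{E}_{\xi_k}\langle g_k, m_{k-1}\rangle_{A_k^{-1}},
\]
and again decompose $f(x_{k-1}) - f(x_k) = [f(x_{k-1}) - f_*] - [f(x_k) - f_*]$. Collecting terms produces exactly the coefficients $(\tfrac{2\beta}{1-\beta} + \rho)$ on both function value differences and the final bracketed gradient/hypergradient expression. The main obstacle I anticipate is purely bookkeeping: keeping the $A_{k+1}$ versus $A_k$ weighting straight (the projection step must use $A_{k+1}$ so that the distance terms telescope cleanly in the outer proof of Theorem~\ref{thm:main}, while the hypergradient term from Lemma~\ref{lem:hypergrad-1-2} naturally comes with $A_k^{-1}$), and making sure the condition $\rho \geq 1$ is not needed for this one-step bound itself but is carried along for use downstream.
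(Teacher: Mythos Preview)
Your proposal is correct and follows essentially the same approach as the paper: expand $\Vert z_{k+1}'-x_*\Vert_{A_{k+1}}^2$, apply projection non-expansiveness, take $\mathbb{E}_{\xi_k}$, use the momentum identity $z_k = \tfrac{1}{1-\beta}x_k - \tfrac{\beta}{1-\beta}x_{k-1}$ together with convexity, and then add in $\rho$ times Lemma~\ref{lem:hypergrad-1-2}. Your closing remark is also accurate: the hypothesis $\rho\geq 1$ is not actually invoked in this one-step argument.
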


\begin{proof}
We start by expanding $x_{k+1}$ around $x_{*}$.
\begin{align*}
& \mathbb{E}_{\xi_{k}}\left\Vert z_{k+1}-x_{*}\right\Vert _{A_{k+1}}^{2}\\
& \leq\mathbb{E}_{\xi_{k}}\left\Vert z_{k+1}^{\prime}-x_{*}\right\Vert _{A_{k+1}}^{2}\\
 & =\mathbb{E}_{\xi_{k}}\left\Vert z_{k}-x_{*}+z_{k+1}^{\prime}-z_{k}\right\Vert _{A_{k+1}}^{2}\\
 & =\mathbb{E}_{\xi_{k}}\left\Vert z_{k}-x_{*}\right\Vert _{A_{k+1}}^{2}+2\mathbb{E}_{\xi_{k}}\left\langle A_{k+1}\left(z_{k+1}^{\prime}-z_{k}\right),z_{k}-x_{*}\right\rangle +\mathbb{E}_{\xi_{k}}\left\Vert z_{k+1}^{\prime}-z_{k}\right\Vert _{A_{k+1}}^{2}\\
 & =\mathbb{E}_{\xi_{k}}\left\Vert z_{k}-x_{*}\right\Vert _{A_{k+1}}^{2}-2\mathbb{E}_{\xi_{k}}\left\langle g_{k},z_{k}-x_{*}\right\rangle +\mathbb{E}\left\Vert g_{k}\right\Vert _{A_{k+1}^{-1}}^{2}\\
 & =\mathbb{E}_{\xi_{k}}\left\Vert z_{k}-x_{*}\right\Vert _{A_{k+1}}^{2}-2\left\langle \nabla f(x_{k}),\frac{1}{1-\beta}x_{k}-\frac{\beta}{1-\beta}x_{k-1}-x_{*}\right\rangle +\mathbb{E}_{\xi_{k}}\left\Vert g_{k}\right\Vert _{A_{k+1}^{-1}}^{2}\\
 & =\mathbb{E}_{\xi_{k}}\left\Vert z_{k}-x_{*}\right\Vert _{A_{k+1}}^{2}-2\left\langle \nabla f(x_{k}),\frac{\beta}{1-\beta}x_{k}-\frac{\beta}{1-\beta}x_{k-1}+x_{k}-x_{*}\right\rangle +\mathbb{E}_{\xi_{k}}\left\Vert g_{k}\right\Vert _{A_{k+1}^{-1}}^{2}\\
 & =\mathbb{E}_{\xi_{k}}\left\Vert z_{k}-x_{*}\right\Vert _{A_{k+1}}^{2}-2\left\langle \nabla f(x_{k}),x_{k}-x_{*}\right\rangle -\frac{2\beta}{1-\beta}\left\langle \nabla f(x_{k}),x_{k}-x_{k-1}\right\rangle +\mathbb{E}_{\xi_{k}}\left\Vert g_{k}\right\Vert _{A_{k+1}^{-1}}^{2}\\
 & \leq\mathbb{E}_{\xi_{k}}\left\Vert z_{k}-x_{*}\right\Vert _{A_{k+1}}^{2}-2\left[f(x_{k})-f_{*}\right]-\frac{2\beta}{1-\beta}\left[f(x_{k})-f(x_{k-1})\right]+\mathbb{E}_{\xi_{k}}\left\Vert g_{k}\right\Vert _{A_{k+1}^{-1}}^{2}
\end{align*}

We combine with Lemma \ref{lem:hypergrad-1-2} multiplied by $\rho$:
\begin{align*}
\mathbb{E}_{\xi_{k}}\left\Vert z_{k+1}-x_{*}\right\Vert _{A_{k+1}}^{2} & \leq\mathbb{E}_{\xi_{k}}\left\Vert z_{k}-x_{*}\right\Vert _{A_{k+1}}^{2}-2\left[f(x_{k})-f_{*}\right]+\left(\frac{2\beta}{1-\beta}+\rho\right)\left[f(x_{k-1})-f(x_{k})\right]\\
 & +\mathbb{E}_{\xi_{k}}\left[\left\Vert g_{k}\right\Vert _{A_{k+1}^{-1}}^{2}-\rho\left\langle g_{k},m_{k-1}\right\rangle _{A_{k}^{-1}}\right].
\end{align*}
\end{proof}

\subsection{Proof of Theorem \ref{thm:telescoped-bound}}
\label{sec:thm-tele-proof} We prove a more general form of Theorem \ref{thm:telescoped-bound}, covering the use of momentum and projection. The variant listed in the body of the paper is the special case where $\mathcal{X}=\mathbb{R}^d$ and $\beta=0$. Consider the case where $k=0$. Then:
\begin{align*}
\mathbb{E}_{\xi_{0}}\left\Vert z_{1}-x_{*}\right\Vert _{A_{1}}^{2} & \leq\mathbb{E}_{\xi_{0}}\left\Vert z_{1}^{\prime}-x_{*}\right\Vert _{A_{1}}^{2}\\
 & =\mathbb{E}_{\xi_{0}}\left\Vert z_{0}-x_{*}+z_{1}^{\prime}-z_{0}\right\Vert _{A_{1}}^{2}\\
 & =\mathbb{E}_{\xi_{0}}\left\Vert z_{0}-x_{*}\right\Vert _{A_{1}}^{2}+2\left\langle A_{1}\left(z_{1}^{\prime}-z_{0}\right),z_{0}-x_{*}\right\rangle +\mathbb{E}_{\xi_{0}}\left\Vert z_{1}^{\prime}-z_{0}\right\Vert _{A_{1}}^{2}\\
 & =\mathbb{E}_{\xi_{0}}\left\Vert z_{0}-x_{*}\right\Vert _{A_{1}}^{2}-2\mathbb{E}\left\langle g_{0},z_{0}-x_{*}\right\rangle +\mathbb{E}_{\xi_{0}}\left\Vert g_{0}\right\Vert _{A_{1}^{-1}}^{2}\\
 & \leq\mathbb{E}_{\xi_{0}}\left\Vert z_{0}-x_{*}\right\Vert _{A_{1}}^{2}-2\left[f(x_{0})-f_{*}\right]+\mathbb{E}_{\xi_{0}}\left\Vert g_{0}\right\Vert _{A_{1}^{-1}}^{2}.
\end{align*}

Using the law of total expectation, we telescope Lemma \ref{lem:onestep}
together with the base case from $k=0$ to $n$:
\begin{align*}
2\mathbb{E}\sum_{k=0}^{n}\left[f(x_{k})-f_{*}\right] & \leq\mathbb{E}\left\Vert z_{0}-x_{*}\right\Vert _{A_{1}}^{2}+\left(\frac{2\beta}{1-\beta}+\rho\right)\left[f(x_{0})-f_{*}\right]\\
 & +\mathbb{E}\sum_{k=1}^{n}\left\Vert z_{k}-x_{*}\right\Vert _{\left(A_{k+1}-A_{k}\right)}^{2}\\
 & +\mathbb{E}\sum_{k=0}^{n}\left[\left\Vert g_{k}\right\Vert _{A_{k+1}^{-1}}^{2}-\rho\left\langle g_{k},m_{k-1}\right\rangle _{A_{k}^{-1}}\right].
\end{align*}

\begin{lem}
\label{lem:iterate-bound-lemma}
Define $D_{\infty}=\sup_{x\in\mathcal{X}}\left\Vert x-x_{*}\right\Vert$. The GradaGrad update satisfies:
\[
\left\Vert z_{0}-x_{*}\right\Vert _{A_{1}}^{2}+\mathbb{E}\sum_{k=1}^{n}\left[\left\Vert z_{k}-x_{*}\right\Vert _{\left(A_{k+1}-A_{k}\right)}^{2}\right]\leq 
\frac{D_{\infty}^{2}}{\gamma_{0}}\mathbb{E}\sum_{i=1}^{d}\sqrt{\alpha_{i,n+1}}.
\]
\end{lem}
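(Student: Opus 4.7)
The plan is to exploit the per-coordinate diagonal structure of $A_{k+1}$ and handle the non-PSD nature of $A_{k+1}-A_k$ by a sign-based splitting that telescopes cleanly once we use the reparameterization identities.

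First I would expand $\|z_k-x_*\|_{A_{k+1}-A_k}^2=\sum_{i=1}^d (z_{i,k}-x_{i,*})^2 (A_{i,k+1}-A_{i,k})$ coordinate by coordinate. Reading off the GradaGrad update (Algorithm~\ref{alg:gradagrad-diagonal}) I would observe two exclusive cases: (a) when $v_{i,k}\geq 0$, $\alpha_{i,k+1}=\alpha_{i,k}+v_{i,k}$ and $\gamma_{i,k+1}=\gamma_{i,k}$, so $A_{i,k+1}\geq A_{i,k}$; (b) when $v_{i,k}<0$, $\alpha_{i,k+1}=\alpha_{i,k}$ while $\gamma_{i,k+1}\geq\gamma_{i,k}$ by the reparameterization \eqref{eq:negupdate}, so $A_{i,k+1}\leq A_{i,k}$. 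The contributions from case (b) are therefore nonpositive and may be discarded, giving
\[
\sum_{k=1}^n\|z_k-x_*\|_{A_{k+1}-A_k}^2 \leq \sum_{i=1}^d\sum_{k=1}^n (z_{i,k}-x_{i,*})^2 (A_{i,k+1}-A_{i,k})_+ .
\]

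Second, using $(z_{i,k}-x_{i,*})^2\leq D_\infty^2$ (which is valid since $z_k\in\mathcal{X}$ after projection), I would pull the constant out and reduce to bounding $\sum_i\sum_{k=1}^n (A_{i,k+1}-A_{i,k})_+$. On the steps where $v_{i,k}\geq 0$ the positive part equals $(\sqrt{\alpha_{i,k+1}}-\sqrt{\alpha_{i,k}})/\gamma_{i,k+1}$, and on the steps where $v_{i,k}<0$ \emph{both} the positive part and the telescoping increment $\sqrt{\alpha_{i,k+1}}-\sqrt{\alpha_{i,k}}$ are zero. Using monotonicity $\gamma_{i,k+1}\geq\gamma_0$ and telescoping over $k$ then gives
\[
\sum_{k=1}^n (A_{i,k+1}-A_{i,k})_+ \leq \frac{1}{\gamma_0}\bigl(\sqrt{\alpha_{i,n+1}}-\sqrt{\alpha_{i,1}}\bigr).
\]

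Third, I would bound the initial term in the same way: $\|z_0-x_*\|_{A_1}^2\leq D_\infty^2\sum_i A_{i,1}=D_\infty^2\sum_i\sqrt{\alpha_{i,1}}/\gamma_{i,1}\leq (D_\infty^2/\gamma_0)\sum_i\sqrt{\alpha_{i,1}}$. Summing with the previous step the $\sqrt{\alpha_{i,1}}$ contributions cancel, and taking expectation yields exactly $(D_\infty^2/\gamma_0)\,\mathbb{E}\sum_i\sqrt{\alpha_{i,n+1}}$.

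The main obstacle is justifying the discard of the negative diagonal entries and making sure the telescoping is not spoiled by the increasing-$\gamma$ branch. The crucial structural fact that unlocks the argument is precisely the reparameterization design: $\alpha$ is frozen exactly on those steps where $\gamma$ expands, which simultaneously forces $(A_{i,k+1}-A_{i,k})_+ = 0$ and $\sqrt{\alpha_{i,k+1}}-\sqrt{\alpha_{i,k}}=0$, so dropping the bad terms on the left and on the right happens in lockstep and the remaining sum collapses to the boundary values.
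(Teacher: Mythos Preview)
Your argument is correct and is essentially the paper's own proof: both reduce to a per-coordinate analysis, split on the sign of $v_{i,k}$, drop the nonpositive $(A_{i,k+1}-A_{i,k})$ contributions using $\alpha_{i,k+1}=\alpha_{i,k}$ on those steps, bound $(z_{i,k}-x_{i,*})^2\leq D_\infty^2$ and $\gamma_{i,k+1}\geq\gamma_0$, and telescope to $\frac{D_\infty^2}{\gamma_0}\sqrt{\alpha_{i,n+1}}$. The paper phrases the telescoping as an induction on the partial sums, but the computation is identical to yours.
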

\begin{proof}
We consider the 1D case for simplicity as the bound is separable in
the dimension of the problem. This allows us to simplify the left
hand side to
\[
\left\Vert z_{0}-x_{*}\right\Vert _{A_{0}}^{2}+\mathbb{E}\sum_{k=1}^{n}\left[\left(\frac{\sqrt{\alpha_{k+1}}}{\gamma_{k+1}}-\frac{\sqrt{\alpha_{k}}}{\gamma_{k}}\right)\left\Vert z_{k}-x_{*}\right\Vert ^{2}\right].
\]
We will prove this by induction. Consider the base case, $v_{0}$ is always
positive so:
\[
\left\Vert z_{0}-x_{*}\right\Vert _{A_{0}}^{2}\leq\frac{D_{\infty}^{2}}{\gamma_{0}}\sqrt{v_{0}}\leq \frac{D_{\infty}^{2}}{\gamma_{0}}\sqrt{\alpha_{1}}.
\]
Now consider $k>0$. We have two cases corresponding to the sign of
$v_{k}$. If $v_{k}$ is negative, then $\gamma_{k+1}$ and $\alpha_{k+1}$
are rescaled, and $A_{k}$ decreases from $A_{k-1}$ so $\frac{\sqrt{\alpha_{k+1}}}{\gamma_{k+1}}-\frac{\sqrt{\alpha_{k}}}{\gamma_{k}}$
is negative, and so:
\[
\frac{D_{\infty}^{2}}{\gamma_{0}}\mathbb{E}\sqrt{\alpha_{k}}+\mathbb{E}\left(\frac{\sqrt{\alpha_{k+1}}}{\gamma_{k+1}}-\frac{\sqrt{\alpha_{k}}}{\gamma_{k}}\right)\left\Vert z_{k}-x_{*}\right\Vert ^{2}\leq\frac{D_{\infty}^{2}}{\gamma_{0}}\mathbb{E}\sqrt{\alpha_{k+1}}.
\]
Now consider the remaining case, where $v_{k}$ is positive, in which case $\gamma$ increases and $\alpha$ stays unchanged:
\begin{align*}
\frac{D_{\infty}^{2}}{\gamma_{0}}\mathbb{E}\sqrt{\alpha_{k}}+\mathbb{E}\left(\frac{\sqrt{\alpha_{k+1}}}{\gamma_{k+1}}-\frac{\sqrt{\alpha_{k}}}{\gamma_{k}}\right)\left\Vert z_{k}-x_{*}\right\Vert ^{2} & \leq\frac{D_{\infty}^{2}}{\gamma_{0}}\mathbb{E}\sqrt{\alpha_{k}}\\
 & =\frac{D_{\infty}^{2}}{\gamma_{0}}\mathbb{E}\sqrt{\alpha_{k+1}}
\end{align*}
\end{proof}

\subsection{Gradient error bounding}
\begin{lem}
\label{lem:r} When $v_{k}\leq0$:
\[
\left\Vert g_{k}\right\Vert _{A_{k+1}^{-1}}^{2}-\rho\left\langle m_{k-1},g_{k}\right\rangle _{A_{k}^{-1}}\leq0,
\]
when
\[
v_{k} \geq -r\alpha_{k},
\]
\[
r=\frac{1}{h_{k}^{2}}-1,
\]
\[
h_{k}=\frac{g_{k}^{2}}{\rho m_{k-1}g_{k}}.
\]
\end{lem}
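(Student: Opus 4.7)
The plan is to reduce the target inequality to a clean condition on the ratio $A_{k+1}/A_k$ and then substitute the explicit reparametrization from Equation~\eqref{eq:negupdate}. Since the statement is per-coordinate, I would work in the scalar setting throughout and dispose of the trivial case $g_k=0$ (for which the left-hand side vanishes) at the outset.

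First, I would rearrange the target. Assuming $g_k\neq 0$ and dividing the inequality $g_k^2/A_{k+1}-\rho\, m_{k-1}g_k/A_k\le 0$ through by $g_k^2$ gives the equivalent form
\[
\frac{A_{k+1}}{A_k}\;\ge\;\frac{g_k}{\rho\, m_{k-1}}\;=\;h_k ,
\]
where the second equality uses the definition $h_k = g_k^2/(\rho\, m_{k-1}g_k)$. The hypothesis $v_k = g_k^2 - \rho\, m_{k-1}g_k\le 0$ forces $\rho\, m_{k-1}g_k\ge g_k^2 > 0$, so $h_k$ is well-defined and strictly positive, and dividing preserves the direction of the inequality. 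Next I would insert the reparametrization: when $v_k\le 0$ the update sets $\alpha_{k+1}=\alpha_k$ and $\gamma_{k+1}=\gamma_k\sqrt{1-v_k/\alpha_k}$, so
\[
\frac{A_{k+1}}{A_k}\;=\;\frac{\sqrt{\alpha_{k+1}}/\gamma_{k+1}}{\sqrt{\alpha_k}/\gamma_k}\;=\;\frac{1}{\sqrt{1-v_k/\alpha_k}} .
\]
The reduced condition becomes $1/\sqrt{1-v_k/\alpha_k}\ge h_k$; squaring both (positive) sides and rearranging gives $-v_k/\alpha_k\le 1/h_k^2 - 1 = r$, i.e.\ $v_k\ge -r\alpha_k$, which is exactly the clipping bound enforced by GradaGrad in the negative branch.

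The only real obstacle here is sign bookkeeping rather than any substantive step: one must check that $\rho\, m_{k-1}g_k>0$ so that dividing by it preserves the inequality, and that $1-v_k/\alpha_k\ge 1>0$ so that squaring is reversible. Both follow immediately from $v_k\le 0$ together with the invariant $\alpha_k>0$, which itself holds by induction from the positive initialization $v_0=G_\infty^2$ and the fact that $\alpha$ only changes along the positive branch. No telescoping, no appeal to Lemma~\ref{lem:hypergrad-1-2}, and no use of convexity or smoothness of $f$ is required — the entire content is a local algebraic manipulation of the $v_k\le 0$ case of the reparametrized update.
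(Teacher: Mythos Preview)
Your proposal is correct and follows essentially the same route as the paper: reduce to the scalar case, rearrange the target inequality into the condition $A_{k+1}/A_k\ge h_k$, substitute the reparametrized update $A_{k+1}/A_k=1/\sqrt{1-v_k/\alpha_k}$, and solve for the clipping threshold $v_k\ge -r\alpha_k$ with $r=1/h_k^2-1$. Your version is slightly more careful than the paper's in explicitly disposing of the $g_k=0$ case and in justifying why $\rho\,m_{k-1}g_k>0$ and $1-v_k/\alpha_k>0$ so that the divisions and squaring are reversible, but the substance is identical.
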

We again consider the 1D case as the inequality is separable in the
dimension. So the required inequality is:
\begin{equation}
\frac{g_{k}^{2}}{A_{k+1}}-\rho\frac{m_{k-1}g_{k}}{A_{k}}\leq0.\label{eq:1d-akbound-1}
\end{equation}
Rearranging:
\[
g_{k}^{2}\leq\rho \frac{A_{k+1}}{A_{k}} m_{k-1}g_{k},
\]
\[
\frac{A_{k+1}}{A_{k}}\geq\frac{g_{k}^{2}}{\rho m_{k-1}g_{k}},
\]
Define: 
\[
h_{k}=\frac{g_{k}^{2}}{\rho m_{k-1}g_{k}}\geq0.
\]
We want to find a value of $r$ such that:
\[
\frac{A_{k+1}}{A_{k}}\geq h_{k}.
\]
Using $v_{k}=-r_{k}\alpha_{k}$ we have:
\[
\frac{\frac{\sqrt{\alpha_{k+1}}}{\gamma_{k+1}}}{\frac{\sqrt{\alpha_{k}}}{\gamma_{k}}}\geq h_{k},
\]
\[
\frac{\frac{\sqrt{\alpha_{k}}}{\gamma_{k}\sqrt{\frac{\alpha_{k}-v_{k}}{\alpha_{k}}}}}{\frac{\sqrt{\alpha_{k}}}{\gamma_{k}}}\geq h_{k},
\]

\[
\frac{1}{\sqrt{\frac{\alpha_{k}-v_{k}}{\alpha_{k}}}}\geq h_{k},
\]
\[
\frac{1}{h_{k}}\geq\sqrt{\left(1+r\right)},
\]
\[
r\leq\frac{1}{h_{k}^{2}}-1.
\]
Therefore any $r$ value smaller than $1/h_k^2 - 1$ is sufficient.

\begin{lem}
\label{lem:grecursive_bound}Consider the 1D case, with $k\geq1$. Let $q\geq 2\sqrt{2+\rho }\gamma_{k+1}^{2}$ if $v_{k}\geq0$:
\[
q A_{k}+\frac{g_{k}^{2}}{A_{k+1}}-\frac{\rho g_{k}m_{k-1}}{A_{k}}\leq q A_{k+1}.
\]
\end{lem}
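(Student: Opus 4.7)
The plan is to unfold the update in the case $v_k\geq 0$, where the algorithm sets $\gamma_{k+1}=\gamma_k$ and $\alpha_{k+1}=\alpha_k+v_k$, so that $\gamma_{k+1}^{2}(A_{k+1}^{2}-A_{k}^{2})=v_k$ and $A_{k+1}\geq A_k$. This already resembles the AdaGrad bookkeeping, but with the twist that the two gradient terms sit over \emph{different} denominators, $A_{k+1}$ and $A_k$, which is precisely what breaks the usual one-step telescoping.

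The algebraic trick I would use is the identity $\rho g_k m_{k-1}=g_k^{2}-v_k$, which lets me rewrite
\[
\frac{g_k^{2}}{A_{k+1}}-\frac{\rho g_k m_{k-1}}{A_k}=g_k^{2}\left(\frac{1}{A_{k+1}}-\frac{1}{A_k}\right)+\frac{v_k}{A_k}\leq\frac{v_k}{A_k},
\]
because $A_{k+1}\geq A_k$ makes the first term non-positive. It therefore suffices to prove $v_k/A_k\leq q(A_{k+1}-A_k)$. Setting $\lambda:=A_{k+1}/A_k\geq 1$ and using $\gamma_{k+1}^{2}(A_{k+1}^{2}-A_k^{2})=v_k$, I obtain $A_k(A_{k+1}-A_k)=v_k/(\gamma_{k+1}^{2}(\lambda+1))$; cancelling the positive factor $v_k$ (the case $v_k=0$ being trivial) reduces the required inequality to the single scalar condition $q\geq\gamma_{k+1}^{2}(1+\lambda)$.

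The main obstacle is therefore to control $\lambda$. Here I would exploit the algorithmic initialization $v_0=G_{\infty}^{2}$ together with the fact that $\alpha$ is non-decreasing across iterations to conclude $\alpha_k\geq G_{\infty}^{2}\geq g_k^{2}$ for every $k\geq 1$. Combined with $|m_{k-1}|\leq G_{\infty}$ in the 1D setting, this yields
\[
v_k=g_k^{2}-\rho g_k m_{k-1}\leq g_k^{2}+\rho\,|g_k m_{k-1}|\leq(1+\rho)G_{\infty}^{2}\leq(1+\rho)\alpha_k,
\]
so $\lambda^{2}=1+v_k/\alpha_k\leq 2+\rho$ and hence $\lambda\leq\sqrt{2+\rho}$. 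Since $\sqrt{2+\rho}\geq 1$ we have $1+\sqrt{2+\rho}\leq 2\sqrt{2+\rho}$, and the hypothesis $q\geq 2\sqrt{2+\rho}\,\gamma_{k+1}^{2}$ gives $q\geq\gamma_{k+1}^{2}(1+\lambda)$, which closes the argument. The subtle point worth highlighting is the necessity of the $G_{\infty}^{2}$ initialization in Algorithm~\ref{alg:gradagrad-diagonal}: without it $\alpha_k$ could be far smaller than $g_k^{2}$, $\lambda$ could grow unboundedly, and no uniform constant of the form $c(\rho)\gamma_{k+1}^{2}$ would suffice.
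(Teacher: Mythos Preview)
Your proof is correct and actually takes a cleaner route than the paper's. The paper proceeds by a case split on the sign of $-g_k m_{k-1}$: in one branch it uses $A_{k+1}\geq A_k$ directly, in the other it invokes the ratio bound $A_{k+1}\leq\sqrt{2+\rho}\,A_k$ to push both terms over the common denominator $A_{k+1}$, arriving at $qA_k+\sqrt{2+\rho}\,v_k/A_{k+1}$; it then closes via the concavity inequality $\sqrt{b-v}\leq\sqrt{b}-v/(2\sqrt{b})$. Your identity $\rho g_k m_{k-1}=g_k^{2}-v_k$ bypasses the case split entirely and reduces the task to the single target $v_k/A_k\leq q(A_{k+1}-A_k)$, which you handle by the exact factoring $A_{k+1}^{2}-A_k^{2}=(A_{k+1}-A_k)(A_{k+1}+A_k)$ rather than concavity. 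Both arguments ultimately rest on the same ratio control $A_{k+1}/A_k\leq\sqrt{2+\rho}$ coming from the $G_\infty^{2}$ initialization---the paper uses it to change denominators, you use it to bound $1+\lambda$---so the content is equivalent, but your packaging is more economical. One point worth flagging: the bound $|m_{k-1}|\leq G_\infty$ you invoke is immediate when $\beta=0$ and there is no projection (then $m_{k-1}=g_{k-1}$), but with momentum and projection it needs its own justification; however, the paper's claim $1/A_k\leq\sqrt{2+\rho}/A_{k+1}$ relies on exactly the same inequality, so you are on the same footing.
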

\begin{proof}
We start by applying a case argument. Consider when $-g_{k}m_{k-1}\geq0$. From the fact that $\alpha_0=G_{\infty}^2$ we have $\frac{1}{A_{k}}\leq\frac{\sqrt{2+\rho }}{A_{k+1}}$, and therefore:
\begin{align*}
q A_{k}+\frac{g_{k}^{2}}{A_{k+1}}-\frac{\rho g_{k}m_{k-1}}{A_{k}} & \leq q A_{k}+\frac{g_{k}^{2}}{A_{k+1}}-\sqrt{2+\rho}\frac{\rho g_{k}m_{k-1}}{A_{k+1}}\\
 & \leq q A_{k}+\sqrt{2+\rho}\frac{g_{k}^{2}}{A_{k+1}}-\sqrt{2+\rho}\frac{\rho g_{k}m_{k-1}}{A_{k+1}}.
\end{align*}
In the other case, where  $-g_{k}m_{k-1}<0$, then we can instead use $A_{k+1}\geq A_{k}$ and thus $-\frac{1}{A_{k}}\leq-\frac{1}{A_{k+1}}$ (given that $v_k$ is assumed positive),
so:
\begin{align*}
q A_{k}+\frac{g_{k}^{2}}{A_{k+1}}-\frac{\rho g_{k}m_{k-1}}{A_{k}} & \leq q A_{k}+\frac{g_{k}^{2}}{A_{k+1}}-\frac{\rho g_{k}m_{k-1}}{A_{k+1}}\\
 & \leq q A_{k}+\sqrt{2+\rho}\left[\frac{g_{k}^{2}}{A_{k+1}}-\frac{\rho g_{k}m_{k-1}}{A_{k+1}}\right].
\end{align*}
Using the notation $A_{k}=\gamma_{k+1}^{-1}\sqrt{b_{k}-\alpha_{k}}$,
where $b_{k}=\sum_{t=0}^{k}\left(g_{t}^{2}-\rho g_{t}m_{t-1}\right)$
and $\alpha_{k}=g_{k}^{2}-\rho g_{k}m_{k-1}$. We have from the concavity
of the square-root function that:
\begin{align*}
\sqrt{b_{k}-\alpha_{k}} & \leq\sqrt{b_{k}}+\frac{b_{k}-\alpha_{k}-b_{k}}{2\sqrt{b_{k}}}\\
 & =\sqrt{b_{k}}-\frac{\alpha_{k}}{2\sqrt{b_{k}}}.
\end{align*}
Therefore:
\begin{align*}
& qA_{k}+\sqrt{2+\rho}\left[\frac{g_{k}^{2}}{A_{k+1}}-\frac{\rho g_{k}m_{k-1}}{A_{k+1}}\right] \\
& \leq qA_{k+1}-\frac{q}{2\gamma_{k+1}}\left[\frac{g_{k}^{2}}{\sqrt{b_{k}}}-\frac{\rho g_{k}m_{k-1}}{\sqrt{b_{k}}}\right]+\sqrt{2+\rho}\left[\frac{g_{k}^{2}}{A_{k+1}}-\frac{\rho g_{k}m_{k-1}}{A_{k+1}}\right]\\
 & =qA_{k+1}-\frac{q}{2\gamma_{k+1}^{2}}\left[\frac{g_{k}^{2}}{A_{k+1}}-\frac{\rho g_{k}m_{k-1}}{A_{k+1}}\right]+\sqrt{2+\rho}\left[\frac{g_{k}^{2}}{A_{k+1}}-\frac{\rho g_{k}m_{k-1}}{A_{k+1}}\right]\\
 & =qA_{k+1}+\left(-\frac{2\sqrt{2+\rho}\gamma_{k+1}^{2}}{2\gamma_{k+1}^{2}}+\sqrt{2+\rho}\right)\left[\frac{g_{k}^{2}}{A_{k+1}}-\frac{\rho g_{k}m_{k-1}}{A_{k+1}}\right]\\
 & = qA_{k+1}.
\end{align*}

\end{proof}
\begin{lem}
\label{lem:gerror_bound}Let $q=2\sqrt{2+\rho}\gamma_{n}^{2}.$ Then:
\[
\sum_{k=0}^{n}\left[\left\Vert g_{k}\right\Vert _{A_{k+1}^{-1}}^{2}-\rho\left\langle g_{k-1},g_{k}\right\rangle _{A_{k}^{-1}}\right]\leq q\sum_{i=1}^{d}A_{i,1}+\frac{q}{\gamma_{0}}\sum_{i=1}^{d}\sqrt{\alpha_{i,n+1}}.
\]
\end{lem}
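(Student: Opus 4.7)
The plan is to reduce to the one-dimensional case by separability of the inequality across coordinates (as in the proof of Lemma \ref{lem:iterate-bound-lemma}), and then, in 1D, split the sum according to the sign of $v_k$, mirroring the two branches of Algorithm \ref{alg:gradagrad-diagonal}.

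For the initial term $k=0$, since $m_{-1}=0$, the summand reduces to $g_0^2/A_1$. The initialization $v_0=G_\infty^2$ together with $\gamma_1=\gamma_0$ yields $A_1=G_\infty/\gamma_0$, so $g_0^2/A_1\leq G_\infty\gamma_0 \leq qA_1$ using $g_0^2\leq G_\infty^2$, $\gamma_n\geq \gamma_0$, and $q=2\sqrt{2+\rho}\gamma_n^2$. This gives the $q\sum_i A_{i,1}$ piece once we sum over coordinates.

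For $k\geq 1$ with $v_k\geq 0$, I would invoke Lemma \ref{lem:grecursive_bound}. Its hypothesis $q\geq 2\sqrt{2+\rho}\gamma_{k+1}^2$ holds because $\gamma$ is nondecreasing, so $\gamma_{k+1}\leq \gamma_n$. The lemma then gives the per-step bound $\mathrm{LHS}_k\leq q(A_{k+1}-A_k)$. In this branch $\gamma_{k+1}=\gamma_k\geq \gamma_0$, so
\[
q(A_{k+1}-A_k)=\frac{q}{\gamma_{k+1}}\bigl(\sqrt{\alpha_{k+1}}-\sqrt{\alpha_k}\bigr)\leq \frac{q}{\gamma_0}\bigl(\sqrt{\alpha_{k+1}}-\sqrt{\alpha_k}\bigr).
\]
For $k\geq 1$ with $v_k<0$, Lemma \ref{lem:r} (enforced by the $r$-clipping in the algorithm) gives $\mathrm{LHS}_k\leq 0$ directly. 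Because the reparametrization of Section \ref{sec:reparametrization} keeps $\alpha_{k+1}=\alpha_k$ in this branch, $(q/\gamma_0)(\sqrt{\alpha_{k+1}}-\sqrt{\alpha_k})=0$ as well. Hence in both cases the common per-step inequality $\mathrm{LHS}_k\leq (q/\gamma_0)\bigl(\sqrt{\alpha_{k+1}}-\sqrt{\alpha_k}\bigr)$ holds.

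Summing for $k=1,\dots,n$ telescopes to $(q/\gamma_0)(\sqrt{\alpha_{n+1}}-\sqrt{\alpha_1})\leq (q/\gamma_0)\sqrt{\alpha_{n+1}}$; adding the $k=0$ contribution of $qA_1$ proves the 1D bound, and summing over the $d$ coordinates yields the claim. The main obstacle is the negative-$v_k$ branch: one must invoke both algorithmic safeguards simultaneously --- the reparametrization freezing $\alpha$ and the $r$-clipping from Lemma \ref{lem:hbound}/\ref{lem:r} forcing the summand to be nonpositive --- so that a single per-step telescoping inequality is valid on both branches. The remaining work is bookkeeping: checking that $\gamma_{k+1}\leq \gamma_n$ whenever Lemma \ref{lem:grecursive_bound} is applied (which follows from monotonicity of $\gamma_k$) and handling the boundary term at $k=0$ where $m_{-1}=0$ sidesteps the issue of $A_0$ being undefined.
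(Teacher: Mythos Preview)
Your proposal is correct and follows essentially the same route as the paper: reduce to 1D, split by the sign of $v_k$, use Lemma~\ref{lem:r} on the negative branch and Lemma~\ref{lem:grecursive_bound} on the nonnegative branch, then telescope. The only difference is organizational---you establish the single per-step bound $\mathrm{LHS}_k\leq(q/\gamma_0)(\sqrt{\alpha_{k+1}}-\sqrt{\alpha_k})$ on both branches and telescope $\sqrt{\alpha_k}$ directly, whereas the paper first telescopes $A_k$ to obtain $qA_{n+1}+q\sum_k I[v_k\leq 0](A_k-A_{k+1})$ and then rewrites $qA_{n+1}=qA_1+q\sum(A_{k+1}-A_k)$ so the indicator sums combine; your shortcut is slightly cleaner but uses the same ingredients. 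One small remark: your justification ``$\gamma_{k+1}\leq\gamma_n$ since $\gamma$ is nondecreasing'' needs the observation that on the $v_k\geq 0$ branch $\gamma_{k+1}=\gamma_k$, so even at $k=n$ one has $\gamma_{n+1}=\gamma_n$ and the hypothesis of Lemma~\ref{lem:grecursive_bound} is met.
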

\begin{proof}
Without loss of generality, we again consider the 1D case. Recall that if $v_{k}<0$ (which can only occur when $k\geq 1$):
\begin{align*}
q A_{k}+\frac{g_{k}^{2}}{A_{k+1}}-\frac{\rho g_{k}m_{k-1}}{A_{k}} & \leq q A_{k}\\
 & =q A_{k+1}+q\left(A_{k}-A_{k+1}\right),
\end{align*}
and if $v_{k}>0$ and $k\geq 1$:
\[
q A_{k}+\frac{g_{k}^{2}}{A_{k+1}}-\frac{\rho g_{k}m_{k-1}}{A_{k}}\leq q A_{k+1}.
\]
We will telescope this bound from the base-case $\frac{g_{0}^{2}}{A_{1}}\leq qA_{1}$ (which is trivially true) to $A_{n+1}$.
Combining we have:
\begin{align*}
\sum_{k=0}^{n}\left[\frac{g_{k}^{2}}{A_{k+1}}-\frac{\rho g_{k}m_{k-1}}{A_{k}}\right] & \leq qA_{n+1}+q\sum_{k=1}^{n}I\left[v_{k}\leq0\right]\left(A_{k}-A_{k+1}\right)\\
 & =qA_{1}-q\sum_{k=1}^{n}\left(A_{k}-A_{k+1}\right)+q\sum_{k=1}^{n}I\left[v_{k}\leq0\right]\left(A_{k}-A_{k+1}\right)\\
 & =qA_{1}-q\sum_{k=1}^{n}I\left[v_{k}\geq0\right]\left(A_{k}-A_{k+1}\right)\\
 & =qA_{1}+q\sum_{k=1}^{n}I\left[v_{k}>0\right]\left(A_{k+1}-A_{k}\right)\\
 & =qA_{1}+q\sum_{k=1}^{n}I\left[v_{k}>0\right]\frac{1}{\gamma_{k}}\left(\sqrt{\alpha_{k+1}}-\sqrt{\alpha_{k}}\right)\\
 & \leq qA_{1}+\frac{q}{\gamma_{0}}\sum_{k=1}^{n}I\left[v_{k}>0\right]\left(\sqrt{\alpha_{k+1}}-\sqrt{\alpha_{k}}\right)\\
 & =qA_{1}+\frac{q\sqrt{\alpha_{n+1}}}{\gamma_{0}}.
\end{align*}

\end{proof}

\subsection{Proof of Theorem \ref{thm:main}}
Recall the general form of Theorem \ref{thm:telescoped-bound} from Section \ref{sec:thm-tele-proof}:
\begin{align*}2\mathbb{E}\sum_{k=0}^{n}\left[f(x_{k})-f_{*}\right] & \leq\mathbb{E}\left\Vert z_{0}-x_{*}\right\Vert _{A_{1}}^{2}+\left(\frac{2\beta}{1-\beta}+\rho\right)\left[f(x_{0})-f_{*}\right]\\
 & +\mathbb{E}\sum_{k=1}^{n}\left\Vert z_{k}-x_{*}\right\Vert _{\left(A_{k+1}-A_{k}\right)}^{2}\\
 & +\mathbb{E}\sum_{k=0}^{n}\left[\left\Vert g_{k}\right\Vert _{A_{k+1}^{-1}}^{2}-\rho\left\langle g_{k},m_{k-1}\right\rangle _{A_{k}^{-1}}\right].
\end{align*}

We apply the iterate error bound (Lemma \ref{lem:iterate-bound-lemma})
and the gradient error bound (Lemma \ref{lem:gerror_bound}) to give:
\begin{align*}2\mathbb{E}\sum_{k=0}^{n}\left[f(x_{k})-f_{*}\right] & \leq\left(\frac{2\beta}{1-\beta}+\rho\right)\left[f(x_{0})-f_{*}\right]\\
 & +\frac{D_{\infty}^{2}}{\gamma_{0}}\mathbb{E}\sum_{i=1}^{d}\sqrt{\alpha_{i,n+1}}+q\mathbb{E}\sum_{i=1}^{d}A_{i,1}+\mathbb{E}\frac{q}{\gamma_{0}}\sum_{i=1}^{d}\sqrt{\alpha_{i,n+1}}.
\end{align*}
Now using $f(\bar{x}_{n+1})-f_{*}\leq\frac{1}{n+1}\sum_{k=0}^{n}\left[f(x_{k})-f_{*}\right]$,
we have:
\begin{align*}\mathbb{E}\left[f(\bar{x}_{n+1})-f_{*}\right] & \leq\frac{1}{2(n+1)}\left(\frac{2\beta}{1-\beta}+\rho\right)\left[f(x_{0})-f_{*}\right]\\
 & +\left(\frac{D_{\infty}^{2}}{2\gamma_{0}(n+1)}+\frac{q}{2\gamma_{0}(n+1)}\right)\mathbb{E}\sum_{i=1}^{d}\sqrt{\alpha_{i,n+1}}+\frac{q}{2(n+1)}\mathbb{E}\sum_{i=1}^{d}A_{i,1}.
\end{align*}
Now using that $\alpha_{i,n+1}\leq(n+1)G_{\infty}^2,$ and $A_{i1}\leq G_{\infty}/\gamma_{0}$
and the definition of $q$:
\begin{align*}
\mathbb{E}\left[f(\bar{x}_{n+1})-f_{*}\right] & \leq\frac{1}{2(n+1)}\left(\frac{2\beta}{1-\beta}+\rho\right)\left[f(x_{0})-f_{*}\right]\\
 & +\left(\frac{D_{\infty}^{2}}{2\gamma_{0}(n+1)}+\frac{q}{2\gamma_{0}(n+1)}\right)d\sqrt{n+1}G_{\infty}+\frac{qd}{2(n+1)\gamma_{0}}G_{\infty}.
\end{align*}
Simplifying:
\begin{align*}
\mathbb{E}\left[f(\bar{x}_{n+1})-f_{*}\right] & \leq\frac{1}{2(n+1)}\left(\frac{2\beta}{1-\beta}+\rho\right)\left[f(x_{0})-f_{*}\right]\\
 & +\left(\frac{D_{\infty}^{2}}{2}+q\right)\frac{dG_{\infty}}{\gamma_{0}\sqrt{n+1}}.
\end{align*}
Now further using that $q=2\sqrt{2+\rho}\gamma_{n}^{2}\leq2\sqrt{2+\rho}D_{\infty}^{2}$:
\begin{align*}
\mathbb{E}\left[f(\bar{x}_{n+1})-f_{*}\right] & \leq\frac{1}{2(n+1)}\left(\frac{2\beta}{1-\beta}+\rho\right)\left[f(x_{0})-f_{*}\right]\\
 & +3\sqrt{2+\rho}\frac{dG_{\infty}D_{\infty}^{2}}{\gamma_{0}\sqrt{n+1}}.
\end{align*}

\section{Further experimental details}
For the convex experiments, we trained in PyTorch using CPU training. Each training run took several minutes at most, depending on the dataset.
Our DLRM training was run on a V100 GPU, and took approximately 6 hours per training run.
\end{document}